\documentclass[11pt]{article}
\usepackage[utf8]{inputenc}
\usepackage[T1]{fontenc}
\usepackage[margin=1in]{geometry}
\usepackage{amsmath,amssymb,amsthm,mathtools}
\usepackage{graphicx}
\usepackage{booktabs,array}
\usepackage{xcolor}
\usepackage{enumitem}
\usepackage[numbers,sort&compress]{natbib}
\usepackage{tikz}
\usetikzlibrary{arrows.meta,positioning}
\usepackage[hidelinks]{hyperref}
\usepackage[capitalize,noabbrev]{cleveref}

\crefformat{equation}{(#2#1#3)}
\Crefformat{equation}{Equation~(#2#1#3)}
\newtheorem{proposition}{Proposition}
\newtheorem{lemma}[proposition]{Lemma}
\newtheorem{corollary}[proposition]{Corollary}

\setlist{itemsep=2pt,topsep=4pt}

\newcommand{\sgn}{\operatorname{sgn}}
\newcommand{\Prob}{\mathrm{P}}
\newcommand{\E}{\mathbb{E}}
\newcommand{\N}{\mathcal{N}}
\newcommand{\R}{\mathbb{R}}
\newcommand{\data}{\mathcal{D}}
\DeclareMathOperator*{\argmax}{arg\,max}
\newcommand{\unseen}{\textup{\textsc{unseen}}}
\newcommand{\jammed}{\textup{\textsc{jammed}}}
\newcommand{\free}{\textup{\textsc{free}}}
\newcommand{\vref}{v_{\mathrm{ref}}}
\IfFileExists{open_world/short/l2a_short.tex}{\newcommand{\sroot}{open_world/short/}}{\newcommand{\sroot}{}}

\title{Open-World Law-to-Action:\\ Vision--Language Priors and Touch for Parts with Hidden Physics}
\author{Mohamed A.~Mabrok\\ Qatar University\\ \texttt{m.a.mabrok@gmail.com}}
\date{}

\begin{document}
\maketitle

\begin{abstract}
A robot in a home it has never seen must open doors, drawers, lids and taps whose physics a picture does not show.
Instructions such as ``open the drawer halfway, gently'' depend on that physics: how far a part travels, how it
resists, and what locks it. We present Open-World Law-to-Action (L2A). A vision--language model (VLM) proposes, for
each movable part, its motions, the terms of its law of motion, and what may lock it. Each proposal becomes a prior
with a floor, so that touch can always overturn it. The robot identifies each part's law online by Bayesian linear
regression with closed-form evidence, for parts that move in one way or in two, and keeps a hypothesis that an unseen
part or motion locks the target. We prove that touch alone cannot rule this hypothesis out, derive when another look
is worth its cost, and show that a constant load and friction cannot be told apart while a part moves one way. All
experiments are in simulation, developed on some scenes and run once on held-out ones. In new kitchens the robot
completes 80\% of tasks from the VLM's proposals, against 98\% when every part is known. Proposed physics raises
success on objects that need new terms from 353 to 378--383 of 400 episodes, and a second look raises success in
mixed scenes from 284 to 329 of 366 tasks. A mobile robot that takes nothing from the simulator completes 70 of 100
requests in new scenes, against 62 when the simulator says which part each box is (after one change made between two
held-out tests).
\end{abstract}

%=====================================================================
\section{Introduction}
\label{sec:intro}

Ask a robot to ``open the cupboard halfway, gently''. To do this well it needs three facts that a
picture does not give. It needs the \emph{range}: ``halfway'' is half of the travel up to a stop that
may be hidden inside the cupboard. It needs the \emph{resistance}: friction, a closing spring or the
weight of a lid decide how much force moves the door, and too much force makes the motion fast and not
gentle. It needs the \emph{locks}: a catch may hold the door shut until a knob is turned or a handle is
pressed, and the knob may not be in view. People learn these facts by trying. They start from a guess
based on what they see, pull a little, and feel what happens.

Current methods use only one of these two sources well. Vision--language--action
policies~\citep{brohan2023rt2,kim2024openvla,black2025pi0} map images and words to motor commands. They
learn hidden physics only implicitly, from demonstrations, and a policy that imitates a teacher who
already knows the stop never learns to gather that information itself~\citep{weihs2021imitationgap}.
Planners built on language and vision models~\citep{ahn2022saycan,huang2023voxposer,huang2024rekep}
reason about what the scene shows, but a hidden stop or catch is not in the image. Interactive
perception and system identification~\citep{bohg2017interactive,sturm2011articulation,hausman2015active,martinmartin2014online}
learn how parts move by acting on them, but they assume a known model class and a known set of parts.
Recent systems ask a VLM for physical parameters and then refine them by
interaction~\citep{wang2026phys2real,cicek2026coral,gao2026joda}. What is missing is a principled way to
use a VLM's claims about the \emph{structure} of a law and about \emph{which parts and motions exist and
what locks what}, which stays correct when a claim is wrong and when a part was never seen.

This paper gives such a method (\cref{fig:overview}). The VLM's claims only set the robot's starting
beliefs, and measurements change them. Every law of motion, for a part with one coordinate or two, is
built from a library of standard terms, each linear in its unknown weight. Learning a law is then Bayesian
linear regression, with a closed-form posterior and a closed-form evidence for each candidate structure.
Each prior has a floor, so that no claim can make a hypothesis impossible; only measurements can. The
robot keeps one more hypothesis than a closed-world agent: that the part is locked by a part or a motion
nobody has seen. Instructions are compiled into functions of the law. Uncertainty about the law thus
becomes uncertainty about success, and the robot experiments, looks again, acts or refuses by comparing
success probabilities.

\begin{figure}[tbp]
\centering
\begin{tikzpicture}[node distance=6mm and 7mm, every node/.style={font=\small},
  box/.style={draw, rounded corners=3pt, align=center, minimum height=17mm, text width=27mm, fill=#1},
  box/.default=white, arr/.style={-{Stealth[length=2.2mm]}, thick}]
\node[box=blue!8] (vlm) {\textbf{VLM}\\declares parts,\\motions, blocks,\\locks};
\node[box=orange!12, right=of vlm] (prior) {\textbf{Priors}\\with floors;\\\unseen{} kept};
\node[box=green!10, right=of prior] (id) {\textbf{Identify}\\posterior over\\laws and locks};
\node[box=gray!12, right=of id] (act) {\textbf{Decide, act}\\aim,\\experiment,\\govern speed};
\node[box=gray!6, minimum height=11mm, below=10mm of act] (body) {\textbf{Body}\\push, feel};
\draw[arr] (vlm) -- (prior);
\draw[arr] (prior) -- (id);
\draw[arr] (id) -- (act);
\draw[arr] (act) -- node[right]{force} (body);
\draw[arr] (body.west) -| node[pos=0.75, left, align=right]{motion,\\force that\\arrived} (id.south);
\draw[arr] (act.north) to[bend right=16] node[above]{look again if it raises the chance of success} (vlm.north);
\end{tikzpicture}
\caption{Open-World Law-to-Action. The VLM's declaration sets the priors. Touch updates the posterior over
each part's law and locks. The agent looks again only when a look raises the probability of success by
more than it costs.}
\label{fig:overview}
\end{figure}

\paragraph{Contributions.}
\begin{enumerate}[leftmargin=*]
\item \emph{One model for one-motion and two-motion parts} (\cref{sec:problem}). Laws with one or two
  coordinates share one library and one regression. A catch inside a part (a handle that must be pressed)
  and a lock between parts (a knob that frees a drawer) are the same kind of hypothesis.
\item \emph{Declarations as floored priors} (\cref{sec:priors}). We turn a VLM's claims into priors over
  law structure, range and locks, and show that the floor bounds the evidence needed to overturn any claim
  by a constant (\cref{prop:overturn,cor:steps}).
\item \emph{Unseen parts and motions} (\cref{sec:unseen}). We prove that touch cannot separate ``locked by
  something undeclared'' from ``jammed'' (\cref{lem:unseen}), so a robot in an open world cannot certify by
  touch alone that a task is impossible. We derive the value of another look, with and without latency.
\item \emph{An identifiability result with two uses} (\cref{sec:ident}). On motion in one direction a
  constant load and Coulomb friction share one regression column. We give the variance that a short
  reversal achieves, and show that ignoring the problem makes the feed-forward short by exactly twice the
  load (\cref{lem:alias,prop:reversal,cor:twice}). This explains and fixes failures on gas-strut flaps and
  on drawers that must be lifted.
\item \emph{A held-out evaluation, and a path off the simulator} (\cref{sec:ladder,sec:experiments}). Every
  experiment was developed on one set of scenes, frozen, and run once on new scenes. We state, for each
  experiment, what the robot still takes from the simulator, and remove these step by step, down to a
  robot that sees only through its cameras and feels only through its hand.
\end{enumerate}

%=====================================================================
\section{Problem}
\label{sec:problem}

\subsection{Parts, coordinates and laws}

A scene contains a set $\mathcal P$ of movable parts. The robot does not know the set. A part has one or
two \emph{task coordinates}, the quantities an instruction refers to: the angle of a door, the extension
of a drawer, the lift and the turn of a mixer tap. We write them as $q\in\R^n$, $n\in\{1,2\}$, with $q=0$
at the closed position. The robot applies a generalised force $u\in\R^n$ along them (a force for a slide,
a torque for a hinge) and measures $q$ and $\dot q$ with noise.

The \emph{law} of a part says how $q$ responds to $u$:
\begin{equation}
  M\ddot q \;=\; u \;-\; D\dot q \;-\; F\sgn(\dot q) \;-\; \sum_{j\in h}\alpha_j\,f_j(q,\dot q),
  \qquad 0\le q_i\le q_{s,i}.
  \label{eq:law}
\end{equation}
Here $M$ is the symmetric inertia matrix, $D$ and $F$ are diagonal matrices of viscous damping and Coulomb
friction (the sign is taken per coordinate), and $q_{s,i}$ is the end stop of coordinate $i$. The sum
collects the part's other physics. Each term is a known \emph{feature} $f_j:\R^n\times\R^n\to\R^n$ times an
unknown weight $\alpha_j$. A feature comes from a \emph{block} of a physics library $\mathcal B$
(\cref{tab:library}). The \emph{structure} $h\subseteq\mathcal B$ is the set of blocks present, and the law
of a part is the triple $(h,\psi,q_s)$ with continuous parameters $\psi=(M,D,F,\alpha)$. For one coordinate,
\cref{eq:law} reads $m\ddot q=u-b\dot q-F_c\sgn(\dot q)-\sum_j\alpha_jf_j(q,\dot q)$. We use this scalar form
in examples.

\begin{table}[tbp]
\centering
\small
\caption{The library of blocks. Each block adds a feature that is linear in its weight, so a new block adds
a column to a regression and nothing else. Parts with one coordinate use the upper blocks; parts with two
use springs, couplings and constant forces (16 structures).}
\label{tab:library}
\begin{tabular}{@{}l l l@{}}
\toprule
Block & Feature (weight) & Example \\
\midrule
Return spring on $q_i$ & $q_i e_i$ ($k_i$) & a self-closing door \\
Detents, period $p$ & $\sin(2\pi q/p)$ & clicks of a stove knob \\
Gravity load & $\cos q$, $\sin q$ & a lid that falls shut \\
Constant force on $q_i$ & $e_i$ ($c_i$, signed) & a gas strut; the weight on a lift \\
End damper & $\dot q\,w(q)$, $w\to1$ near the seat & a soft-close door \\
\midrule
Cross spring & $(q_2,\,q_1)$ ($k_{12}$) & a drawer that sags as it comes out \\
Cross inertia & off-diagonal $m_{12}$ of $M$ & a lever coupled through its mass \\
\midrule
End stop & event at $q_i=q_{s,i}$ & where a motion ends \\
Latch & event & a motion that moves only after another is past a level \\
\bottomrule
\end{tabular}
\end{table}

Friction makes the law hybrid. A coordinate at rest stays at rest while the force on it, less the
conservative forces, is within its friction band:
\begin{equation}
  \dot q_i=0 \ \text{ and }\ \Big|\,u_i-\sum_{j\in h}\alpha_j f_{j,i}(q,0)\Big|\le F_i
  \quad\Longrightarrow\quad \ddot q_i=0 .
  \label{eq:stick}
\end{equation}
The method relies on one property of \cref{eq:law}, not on any particular block: \emph{every unknown
coefficient multiplies a known function of the state}. This makes identification a linear regression
(\cref{sec:ident}) and lets the library grow without a new algorithm.

\subsection{Locks, inside a part and between parts}

Call each coordinate of each part a \emph{motion}, and let $\mathcal K$ be the set of motions the robot has
been told about. A motion $x$ may be \emph{latched}: it cannot leave its closed zone unless another motion
$y$ is past a level $\tau$. We call $y$ a \emph{releaser} of $x$. The releaser may be another motion of the
same part (a drawer slides only after it is lifted; a door opens only after its handle is pressed) or a
motion of another part (a knob frees a drawer). A motion may also be \emph{jammed}: nothing releases it.
For each motion $x$ the robot keeps a posterior over the \emph{lock hypotheses}
\begin{equation}
  \mathcal C_x \;=\; \{\free,\ \jammed,\ \unseen\}\ \cup\ \{(y,\tau):\ y\in\mathcal K,\ y\neq x,\ \tau\in\mathcal T\},
  \label{eq:lockset}
\end{equation}
where $(y,\tau)$ means ``$x$ is released once $q_y>\tau$'' and $\mathcal T$ is a grid of levels. The
hypothesis \unseen{} says that the releaser is a motion outside $\mathcal K$: a part the VLM did not
declare, or a motion it did not declare on a part it did. A closed-world agent does not have this
hypothesis.

\subsection{What the robot senses}

The robot has two senses with opposite strengths.
\begin{itemize}[leftmargin=*]
\item \emph{Sight, through a VLM.} A picture of the scene returns a \emph{declaration}: a list of parts,
  and for each part its motions (how many, of what kind, in which direction), a guess of each range, a
  weight for each block, and guesses of what may lock it. A declaration covers the whole scene at once. It
  also misses parts and motions, invents parts, and cannot see hidden stops, springs or catches.
\item \emph{Touch, through the body.} Pushing a part returns its motion under a measured force. Touch is
  slow and covers one part at a time, but it measures the law itself.
\end{itemize}
A declared part with nothing movable behind it is a \emph{phantom}. A part or motion the VLM missed does
not exist for the robot until a later \emph{look}, a new request to the VLM about another view, declares
it.

\subsection{Instructions as functions of the law}

An instruction is compiled into goals and constraints that are functions of the law (\cref{tab:words}).
``Halfway'' means $q\in[0.4\,q_s,\ 0.6\,q_s]$ for \emph{this} part's stop. ``One click'' means the next
stable minimum of \emph{this} knob's potential, which may not exist. ``Gently'' bounds the speed, and the
speed at which the part meets its stop. ``Hotter'' on a mixer tap is a goal on its turn coordinate, and
``more water'' on its lift. Because the goal is a function of the law, uncertainty about the law becomes
uncertainty about success. An episode succeeds if the goal holds at the end and no constraint is broken at
any time. If the instruction is impossible under the true law (the motion is jammed, or its only releaser is
forbidden), the episode succeeds if the robot says so. The objective is the probability of success within a
time budget. Time spent looking counts like time spent pushing.

\begin{table}[tbp]
\centering
\small
\caption{Instruction words as functions of a law $(h,\psi,q_s)$. $\mathcal M$ is the set of stable minima of
the identified potential above the start $q_0$.}
\label{tab:words}
\begin{tabular}{@{}l l l@{}}
\toprule
Words & Meaning under the law & Impossible when \\
\midrule
``fully'' & $q^\star=q_s$ & jammed, or only releaser forbidden \\
``halfway'' & $q^\star\in[0.4\,q_s,\ 0.6\,q_s]$ & as above \\
``one click'' & $q^\star=\min\{q\in\mathcal M:\ q>q_0\}$ & no stable detent above $q_0$ \\
``gently'' & $|\dot q_i|\le v_{\max}$; speed at a stop $\le v_{\mathrm{imp}}$ & never \\
``without touching X'' & $q_X(t)=q_X(0)$ for all $t$ & X is the only releaser \\
\bottomrule
\end{tabular}
\end{table}

As an example of a word that depends on the law, take a knob with a spring and detents. Its potential is
$V(q)=\tfrac12 kq^2-\tfrac{Ap}{2\pi}\cos(2\pi q/p)$, so $V'(q)=kq+A\sin(2\pi q/p)$. A click is a minimum of
$V$ away from zero. Since $V''(q)=k+\tfrac{2\pi A}{p}\cos(2\pi q/p)$, if $2\pi A/p\le k$ then $V''\ge0$
everywhere, $V$ is convex, and its only minimum is $q=0$. Clicks away from zero therefore need $2\pi A/p>k$.
A strong enough spring removes every click, even when the knob has detents. Whether ``one click'' is
possible is a fact about the law, not about how the knob looks.

%=====================================================================
\section{Identifying a law by touch}
\label{sec:ident}

This section shows how the robot learns a part's law from its own pushes, given a prior over structures.
\Cref{sec:priors} then shows where the prior comes from.

\subsection{Each structure is a linear regression}

Moving the known terms of \cref{eq:law} to one side gives $M\ddot q+D\dot q+F\sgn(\dot q)+\sum_{j\in h}\alpha_j
f_j(q,\dot q)=u$. Every coefficient (the entries of $M$, $D$, $F$ and the weights $\alpha$) multiplies a
measured quantity, so
\begin{equation}
  u \;=\; Y_h(q,\dot q,\ddot q)\,\theta_h ,
  \label{eq:linear}
\end{equation}
with a known matrix $Y_h$ of $n$ rows and an unknown coefficient vector $\theta_h$. We fit it in one of two
equivalent forms. For two coordinates we keep the force form: each time step gives two rows of
\cref{eq:linear}, with accelerations from finite differences, and 12 coefficients for the largest structure.
For one coordinate we divide by $m$, which puts the noise on the measured acceleration. Take one control
step of length $\Delta t$ with the force held constant and the velocity keeping its sign. Integrating
\cref{eq:law} over the step, with the integrand at the midpoint $\bar q=\tfrac12(q_t+q_{t+1})$,
$\bar v=\tfrac12(\dot q_t+\dot q_{t+1})$, gives $m(\dot q_{t+1}-\dot q_t)=\Delta t\big(u_t-b\bar v-F_c\sgn(\bar
v)-\sum_j\alpha_j f_j(\bar q,\bar v)\big)+O(\Delta t^3)$. Dividing by $m\,\Delta t$,
\begin{equation}
  y_t\;\coloneqq\;\frac{\dot q_{t+1}-\dot q_t}{\Delta t}\;=\;\phi_h^{\!\top}\beta_h+\varepsilon_t,\qquad
  \phi_h=\big[\,u_t,\ -\bar v,\ -\sgn(\bar v),\ -f_j(\bar q,\bar v)\,\big]^{\!\top},\quad
  \beta_h=\big[\,\tfrac1m,\ \tfrac bm,\ \tfrac{F_c}m,\ \tfrac{\alpha_j}m\,\big]^{\!\top}.
  \label{eq:regression}
\end{equation}
Both forms are a linear regression $y=\Phi_h\beta_h+\varepsilon$ with Gaussian noise
$\varepsilon\sim\N(0,\sigma^2I)$ of unknown variance, and everything below applies to either. We use only
\emph{clean} steps, in which the part slides in one direction, away from its seat and from any detected
stop. Steps where it sticks, touches a stop or is latched carry other information, which \cref{sec:events}
uses.

\subsection{Posterior and evidence in closed form}

For each structure we use the conjugate normal--inverse-gamma prior, $\sigma^2\sim\mathrm{IG}(a_0,b_0)$ and
$\beta_h\mid\sigma^2\sim\N(m_0,\sigma^2V_0)$. Stack $n$ rows into $\Phi_h\in\R^{n\times d_h}$ and $y\in\R^n$. The
posterior is again normal--inverse-gamma~\citep{bishop2006prml}, with
\begin{align}
  V_n^{-1}&=V_0^{-1}+\Phi_h^{\!\top}\Phi_h, & m_n&=V_n\big(V_0^{-1}m_0+\Phi_h^{\!\top}y\big),\label{eq:post1}\\
  a_n&=a_0+\tfrac n2, & b_n&=b_0+\tfrac12\big(y^{\!\top}y+m_0^{\!\top}V_0^{-1}m_0-m_n^{\!\top}V_n^{-1}m_n\big).\label{eq:post2}
\end{align}
The precision $V_n^{-1}$ grows with every informative row. The mean $m_n$ blends the prior guess with the
least-squares fit. The scale $b_n$ collects what the structure fails to explain. Integrating out $\beta_h$
and $\sigma^2$ (\cref{app:nig}) gives the \emph{evidence}, the probability of the data under the structure:
\begin{equation}
  p(y\mid h)\;=\;(2\pi)^{-n/2}\,\underbrace{\sqrt{\frac{|V_n|}{|V_0|}}}_{\text{Occam factor}}\;
  \frac{b_0^{\,a_0}}{b_n^{\,a_n}}\;\frac{\Gamma(a_n)}{\Gamma(a_0)},
  \qquad
  \Prob(h\mid\data)\;\propto\;\Prob(h)\,p(y\mid h).
  \label{eq:evidence}
\end{equation}
All structures use subsets of one full feature vector. The robot therefore keeps the sums
$\Phi^{\!\top}\Phi$, $\Phi^{\!\top}y$, $y^{\!\top}y$ and $n$ once, and reads every structure's posterior and
evidence from their sub-blocks. An update costs $O(d^2)$ per row for $d$ features, whatever the amount of
data.

\paragraph{What the evidence trades off.} For large $n$, $V_n^{-1}\approx n\,\Sigma_\phi$, where $\Sigma_\phi$ is
the mean of $\phi\phi^{\!\top}$. Hence $\log\sqrt{|V_n|/|V_0|}=-\tfrac{d_h}{2}\log n+O(1)$. Also $b_n\approx
\tfrac n2\hat\sigma_h^2$, where $\hat\sigma_h^2$ is the residual variance of the fit. With Stirling's formula for
$\Gamma(a_n)$ (\cref{app:nig}), the log Bayes factor between two structures fitted to the same data is
\begin{equation}
  \log\frac{p(y\mid h)}{p(y\mid h')}\;=\;\frac n2\log\frac{\hat\sigma_{h'}^2}{\hat\sigma_h^2}\;-\;\frac{d_h-d_{h'}}{2}\log n\;+\;O(1).
  \label{eq:bic}
\end{equation}
The first term rewards a better fit and grows linearly in $n$. The second charges each extra coefficient
$\tfrac12\log n$. A spring is adopted only when it explains enough of the motion to pay for its coefficient.
When the data have no spring in them, both fits are equally good and the simpler structure wins.

\subsection{What the data cannot tell apart}

Two columns of $\Phi$ can be proportional on the data even when they are different functions. Then the data
identify only one combination of their weights.

\begin{lemma}[Aliased columns]
\label{lem:alias}
Let columns $x$ and $z$ of $\Phi_h$ satisfy $z=\lambda x$ on every row of the data. Then the likelihood depends
on the weights $(\beta_x,\beta_z)$ only through $\beta_x+\lambda\beta_z$. Along the direction $(\lambda,-1)$ the
posterior equals the prior.
\end{lemma}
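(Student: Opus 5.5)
The likelihood depends on $\beta$ only through the mean vector $\Phi_h\beta_h$, so I would first write out that product and see which combination of $(\beta_x,\beta_z)$ it retains. Then I would reparametrise so that this combination is one coordinate, and read the posterior's behaviour along the other coordinate from the conjugate update \cref{eq:post1}. For the first step, write $\Phi_h\beta_h=\sum_k \beta_k\,\phi_k$ over the columns $\phi_k$. Since $\phi_z=\lambda\phi_x$ on every row, the $x$ and $z$ terms combine to $(\beta_x+\lambda\beta_z)\phi_x$. The Gaussian likelihood $\N(y;\Phi_h\beta_h,\sigma^2 I)$ is therefore a function of $\sigma^2$, of the other weights, and of $s=\beta_x+\lambda\beta_z$ only. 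This proves the first claim.

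For the second claim I would change variables linearly so that one coordinate is $s$ and another is the coordinate along $(\lambda,-1)$. The direction $(\lambda,-1)$ spans the null space of the map $(\beta_x,\beta_z)\mapsto\beta_x+\lambda\beta_z$, because $\lambda\cdot1+(-1)\cdot\lambda=0$. Let $e\in\R^{d_h}$ be the vector with $\lambda$ in slot $x$, $-1$ in slot $z$, and zeros elsewhere. Then $\Phi_h e=\lambda\phi_x-\phi_z=0$, so $e\in\ker\Phi_h$. Adding $t e$ to $\beta_h$ leaves the likelihood unchanged for every $t$ and every value of the other coordinates.

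I would then state the conclusion in two equivalent forms. First, for fixed values of the remaining coordinates, the posterior density along the line $\beta_h+te$ is the prior density times a constant in $t$. So the posterior conditional along $e$ equals the prior conditional. Second, in the conjugate formulas, $e^\top\Phi_h^\top\Phi_h e=\|\Phi_h e\|^2=0$, so $e^\top V_n^{-1}e=e^\top V_0^{-1}e$, and the data add no precision along $e$.

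The step that needs care is pinning down what ``the posterior equals the prior along $(\lambda,-1)$'' means when the prior correlates $e$ with the identified direction $s$. Learning $s$ then shifts the marginal along $e$ through that correlation, even though the data say nothing about $e$ directly. I would therefore state the claim as equality of the conditional distributions given the other coordinates (together with the precision identity above). I would also note that with a prior that is independent across coordinates in the $(s,e)$ basis, this becomes equality of the marginals as well.
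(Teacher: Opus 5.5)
Your proof is correct and follows the paper's argument: the two columns merge into $\phi_x(\beta_x+\lambda\beta_z)$, so the likelihood is constant along $(\lambda,-1)$ and Bayes' rule leaves the prior unchanged on each such line. Your conditional reading and the identity $e^{\top}V_n^{-1}e=e^{\top}V_0^{-1}e$ make explicit what the paper's one-line proof leaves implicit, but your closing aside on marginals needs one qualification: under the paper's normal--inverse-gamma prior the conditioning must include $\sigma^2$, since even when $V_0$ decouples $s$ from $e$ the marginal along $e$ is a Student-$t$ whose scale and degrees of freedom move with $(a_n,b_n)$, so the marginals agree only given $\sigma^2$.
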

\begin{proof}
$\Phi_h\beta_h$ contains $x\beta_x+z\beta_z=x(\beta_x+\lambda\beta_z)$, which does not change when $(\beta_x,\beta_z)$
moves along $(\lambda,-1)$. The likelihood is constant along that line, so Bayes' rule leaves the prior
unchanged there.
\end{proof}

Three cases matter in practice.
(i) A \emph{constant force} on $q_i$ has feature $e_i$, and friction has feature $\sgn(\dot q_i)e_i$. If every
clean step moves $q_i$ the same way, $\sgn(\dot q_i)$ is constant, and only the force plus or minus friction
is identified. A gas strut's preload is an example.
(ii) The \emph{weight on a drawer that must be lifted} is a constant force on the lift. It has the same
problem while the lift only goes up.
(iii) A \emph{cross spring} $k_{12}q_2$ acting on $q_1$ is constant while the hand holds $q_2$ still. It then
looks exactly like a constant force on $q_1$. This is the situation on a door with a lever handle, held
pressed while the door swings.
In each case the evidence \cref{eq:bic} sees the second column as an extra coefficient that buys no fit, and
prefers the structure without it. The fit then puts the load into friction. The cure is motion in both
directions, and the next result says how much is needed.

\begin{proposition}[A reversal separates load and friction]
\label{prop:reversal}
Let $n_+$ clean steps move a coordinate in the positive direction and $n_-$ in the negative direction, and let
the other coefficients be known. For the columns $x=-1$ (load) and $z=-\sgn(\bar v)$ (friction), the Gram
matrix has determinant $4n_+n_-$, and the least-squares estimates have variances
\begin{equation}
  \operatorname{var}(\hat\beta_x)=\operatorname{var}(\hat\beta_z)=\frac{\sigma^2}{4}\Big(\frac1{n_+}+\frac1{n_-}\Big).
  \label{eq:reversal}
\end{equation}
With further unknown columns these variances can only grow. In force units, if $\tau_+$ and $\tau_-$ are the
constant resisting forces measured while moving in the positive and in the negative direction, the load is
$\tfrac12(\tau_++\tau_-)$ and the friction is $\tfrac12(\tau_+-\tau_-)$.
\end{proposition}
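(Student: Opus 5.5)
The plan is to restrict to the two columns in question, since the other coefficients are known and can be moved to the left-hand side. On a positive clean step the row is $(x,z)=(-1,-1)$; on a negative clean step it is $(-1,+1)$. The Gram matrix is therefore
\[
G=\Phi^{\!\top}\Phi=\begin{pmatrix} n_++n_- & n_+-n_-\\ n_+-n_- & n_++n_-\end{pmatrix},
\]
with $\det G=(n_++n_-)^2-(n_+-n_-)^2=4n_+n_-$. This vanishes when either count is zero, which is exactly the aliasing of \cref{lem:alias}. With noise $\N(0,\sigma^2 I)$, least squares has covariance $\sigma^2G^{-1}$. Both diagonal entries of $G^{-1}$ equal $(n_++n_-)/(4n_+n_-)=\tfrac14(1/n_++1/n_-)$, which gives \cref{eq:reversal}. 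The same covariance is the posterior covariance $\sigma^2V_n$ of \cref{eq:post1} in the limit of a flat prior $V_0^{-1}\to0$, so the statement also reads as a Bayesian one.

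The clause ``can only grow'' follows from the standard Schur complement argument. Add further columns $W$ and partition the full Gram matrix. The $(x,z)$ block of its inverse is $\bigl(G-B C^{-1}B^{\!\top}\bigr)^{-1}$, where $B$ is the cross block and $C$ is the Gram matrix of the extra columns. Because $BC^{-1}B^{\!\top}\succeq0$, we have $G-BC^{-1}B^{\!\top}\preceq G$. Taking inverses reverses the order, so the inverse dominates $G^{-1}$ in the Loewner order, and the diagonal entries in particular can only grow. Equivalently, one can invoke the fact that the variance of a coefficient is $\sigma^2$ over the residual sum of squares of its column after projecting out the others, and adding columns can only shrink that residual. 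If $C$ is singular, use a pseudo-inverse or a limiting argument; I expect this degenerate case to be the only delicate point, and it is routine.

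For the force-unit statement, write the resisting force as $c+F_c\sgn(\dot q)$, where $c$ is the load (constant-force weight) and $F_c$ is the friction. When all other terms are known, a positive-direction motion measures $\tau_+=c+F_c$ and a negative-direction motion measures $\tau_-=c-F_c$. Solving this $2\times2$ system gives $c=\tfrac12(\tau_++\tau_-)$ and $F_c=\tfrac12(\tau_+-\tau_-)$. This also explains the variance formula. The least-squares estimates are exactly these half-sums and half-differences of the two group means. Each group mean has variance $\sigma^2/n_\pm$, and $\operatorname{var}\bigl(\tfrac12(\bar\tau_+\pm\bar\tau_-)\bigr)=\tfrac14(\sigma^2/n_++\sigma^2/n_-)$, which serves as a cross-check on the matrix computation. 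Keep the sign convention consistent with the columns $-1$ and $-\sgn(\bar v)$: $\beta_x$ is the load, $\beta_z$ is the friction, and the factor $1/m$ in \cref{eq:regression} is absorbed by working in force form.
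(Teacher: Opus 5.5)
Your proof is correct and follows the paper's own argument: the same Gram matrix with $x^{\top}x=z^{\top}z=n_++n_-$ and $x^{\top}z=n_+-n_-$, the same diagonal $\tfrac14(1/n_++1/n_-)$ of its inverse, and the same solution of $\tau_\pm=p\pm F_c$. For the ``can only grow'' clause the paper uses the Frisch--Waugh--Lovell residual argument, which you give as the alternative; your Schur-complement version is the matrix form of the same fact and gives the slightly stronger Loewner-order growth of the whole $2\times2$ covariance block.
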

\begin{proof}
With $n=n_++n_-$, $x^{\!\top}x=z^{\!\top}z=n$ and $x^{\!\top}z=n_+-n_-$. The determinant is
$n^2-(n_+-n_-)^2=4n_+n_-$, which is positive only if both directions occur. The diagonal of the inverse Gram
matrix is $n/(4n_+n_-)=\tfrac14(1/n_++1/n_-)$, and the least-squares covariance is $\sigma^2$ times this inverse.
By the Frisch--Waugh--Lovell theorem the variance of one coefficient is $\sigma^2$ divided by the squared
residual of its column regressed on the other columns. Adding columns can only shrink that residual, so it
can only raise the variance. The last statement solves $\tau_\pm=p\pm F_c$ for $p$ and $F_c$.
\end{proof}

\Cref{eq:reversal} shows that a short reversal is enough: a few dozen steps in the second direction bring
the variance close to its two-sided value. The next corollary shows what happens without it.

\begin{corollary}[Twice the load]
\label{cor:twice}
Let a lift carry a weight $w>0$ and friction $F_c$, and let the robot have pushed it only upward, so that the
fit puts the whole constant resistance into friction, $\hat F_c=F_c+w$, with no load term. When the robot
lowers the lift slowly, its feed-forward is short of the force needed by exactly $2w$.
\end{corollary}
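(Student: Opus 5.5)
We work with the scalar law \cref{eq:law} on the lift coordinate, writing the weight as a constant force. Take the sign convention of \cref{tab:library}: the weight enters as $\alpha_j f_j = c$ with feature $e_i$, so the true law is $m\ddot q = u - b\dot q - F_c\sgn(\dot q) - c$ with $c = w > 0$, since gravity resists upward motion. The plan has three steps. First, write what the robot believes. Second, compute the force each model says is needed for a slow descent. Third, subtract.

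\textbf{Step one: what the fit returns.} On upward clean steps $\sgn(\bar v)=+1$, so by case (ii) after \cref{lem:alias} the load column and the friction column are aliased with $\lambda=1$. The data fix only the sum $F_c + w$. The hypothesis of the corollary states that the fit assigns this whole sum to friction, so the believed law is
\[
m\ddot q = u - b\dot q - (F_c+w)\sgn(\dot q),
\]
with no load term. I take this as given rather than rederiving it from \cref{eq:bic}. The evidence argument only motivates why the load-free structure is chosen.

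\textbf{Step two: the required forces.} "Slowly" means quasi-static motion, $\ddot q\approx 0$ and $b\dot q\to 0$. I would state this as the limit $\dot q\to 0^-$ with $\ddot q = 0$, which removes the inertial and viscous terms in both models.
\begin{itemize}
\item Under the true law with $\sgn(\dot q)=-1$, the force needed is $u^\star = -F_c + w$. Friction now aids the push and gravity opposes it, in the paper's sign convention for an upward-positive coordinate.
\item Under the believed law, the feed-forward is $\hat u = -(F_c + w)$.
\end{itemize}

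\textbf{Step three: the shortfall.} Subtracting gives $u^\star - \hat u = 2w$. The two signs flip together: the believed friction reverses with direction, but the true load does not. So the feed-forward misses the load twice, once by omitting it and once by having put it into friction with the wrong sign on reversal. A cross-check is the last statement of \cref{prop:reversal}: with $\tau_\pm = w \pm F_c$, the believed model reproduces $\tau_+$ exactly and gives $-\tau_+$ in place of $\tau_-$. The difference is $\tau_- + \tau_+ = 2w$.

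\textbf{Main obstacle.} The algebra is trivial, so the real care is in signs. "Short by $2w$" must mean the commanded force is $2w$ below the required one in the upward direction. Physically, the controller pushes down too hard, or brakes too little, by $2w$. I would fix the orientation explicitly: $q$ points upward, the weight is a resisting constant force $c=w$, and the shortfall is defined as $u^\star - \hat u$. I would also note that the conclusion does not depend on $m$ or $b$ once the motion is quasi-static.
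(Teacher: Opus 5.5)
Your proof is correct and follows the paper's argument: the true law with $\dot q<0$ and $\ddot q\approx0$ needs $u=w-F_c$, the fitted law feeds forward $-(F_c+w)$, and the difference is $2w$. Your sign conventions and the cross-check via $\tau_\pm=w\pm F_c$ from \cref{prop:reversal} agree with it. One small point: the limit $\dot q\to0^-$ is not needed, because the damping and inertial terms cancel in the difference whenever $b$ and $m$ are identified correctly.
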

\begin{proof}
Lowering slowly ($\dot q<0$, $\ddot q\approx0$) needs the upward force $u=w-F_c$, since friction now helps to
hold the lift. The fitted law believes there is no weight and that friction opposes the downward motion, so
its feed-forward is $u=-\hat F_c=-(F_c+w)$. The difference is $(w-F_c)+(F_c+w)=2w$.
\end{proof}
The feedback loop corrects the shortfall only after the lift has dropped. \Cref{cor:twice} caused almost all
failures on drawers that must be lifted (\cref{sec:exp-two}). The fix is the reversal of
\cref{prop:reversal}. Before it first lowers a coordinate it has moved only one way, the robot eases its push
off slowly until the part starts to sink, and lets it sink a short way under a soft speed loop. The same
probe separates a gas strut's preload from friction (\cref{sec:exp-g3}).

\subsection{Stops and locks are events}
\label{sec:events}

A stop $q_{s,i}$ does not appear in the regression. The robot keeps its posterior on a grid. Motion
truncates it: once the coordinate has reached $q_{\max}$, $q_s\ge q_{\max}$. Blocking pins it. When the
coordinate sits at the edge of its explored range and is pushed harder than almost every plausible law needs,
\begin{equation}
  Q_{0.1}\big[\,u-V'_\psi(q)-F_c\,\big]_{\psi\sim p(\psi\mid\data)}>\delta
  \quad\text{for $N_b$ steps},
  \label{eq:blocked}
\end{equation}
the posterior is multiplied by $\N(q_s;q,\sigma_c^2)$. Here $V_\psi$ is the potential of a sampled law and
$Q_{0.1}$ is the 10th percentile over posterior samples. The percentile keeps a part that only sticks, while
friction is still uncertain, from being taken for a stop.

Locks are learned from \emph{opening attempts}: the robot pushes motion $x$ out of its closed zone while the
other motions are at $q_{-x}$. Each hypothesis $c\in\mathcal C_x$ predicts the outcome $r_c(q_{-x})\in\{0,1\}$.
\free{} predicts a release, \jammed{} predicts none, and $(y,\tau)$ predicts a release if and only if $q_y>\tau$.
With a small probability $\epsilon$ of misreading an outcome, the posterior is the prior times
$\prod_{\text{attempts}}(1-\epsilon)^{[o=r_c]}\epsilon^{[o\neq r_c]}$, where $o$ is the observed outcome. The
probability that a configuration releases $x$ is
\begin{equation}
  \Prob(\text{release}\mid q_{-x},\data)\;=\;\sum_{c\in\mathcal C_x}\Prob(c\mid\data)\,r_c(q_{-x}).
  \label{eq:release}
\end{equation}
Near the closed position the push is capped just above what plausible friction needs. Stillness at the cap
is then evidence of a lock, and a sudden release does not launch the part. With a robot body, the cap and the
evidence use the force that arrived at the part, not the force commanded (\cref{sec:ladder}).

%=====================================================================
\section{Priors from a picture}
\label{sec:priors}

\subsection{What a declaration contains}

For each part it sees, the VLM returns a box in the image and its motions. For each motion it gives the kind
(slide or turn) and direction (front--back, up--down, left--right, a vertical or a horizontal axis), a range
guess $\hat r$ with a stated spread $\hat\sigma$, and safety notes. For the part it gives a weight $w_b\in[0,1]$
for each block, read as the probability that the block is in the law, and lock weights $\omega_{xy}\in[0,1]$ for
motions $y$ that may release motion $x$, within the part or on other declared parts. The VLM first lists the
parts it sees, then declares them in batches under a fixed JSON schema. \Cref{fig:declare} shows declarations
on two held-out kitchens.

\subsection{Structure}

We treat the blocks as independent. The prior of a structure is a product of Bernoulli terms with a floor
$\eta$:
\begin{equation}
  \Prob(h\mid d)=\prod_{b\in\mathcal B}\pi_b^{[b\in h]}(1-\pi_b)^{[b\notin h]},\qquad
  \pi_b=\min\big(1-\eta,\ \max(\eta,\ w_b)\big),\qquad \eta=0.05 .
  \label{eq:soft}
\end{equation}
The detent block is one Bernoulli term whose mass is split evenly over the candidate periods. For a part with
two motions the VLM answers per motion (a return spring, a steady force) and for the pair (coupled or not). A
spring on either motion is combined by a noisy-OR, which gives four block probabilities and the prior of each
of the 16 structures. With no declaration, every $\pi_b=\tfrac12$. With $\eta=0$ and weights rounded to 0 or 1,
the declaration becomes a \emph{hard choice} of the hypothesis space. The floor is what makes the declaration
a prior rather than a decision. The next result makes this exact.

\begin{proposition}[The floor bounds the cost of a wrong claim]
\label{prop:overturn}
Let $h$ and $h'$ differ on the set $\Delta$ of Bernoulli blocks. Under \cref{eq:soft}, for any declaration,
\begin{equation}
  \Big|\log\frac{\Prob(h\mid d)}{\Prob(h'\mid d)}\Big|\;\le\;|\Delta|\,\log\frac{1-\eta}{\eta}.
  \label{eq:overturn}
\end{equation}
Hence the posterior prefers $h$ to $h'$, whatever the VLM said, as soon as the Bayes factor
$p(y\mid h)/p(y\mid h')$ exceeds $\big((1-\eta)/\eta\big)^{|\Delta|}$.
\end{proposition}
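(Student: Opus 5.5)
The proof works directly with the product form of \cref{eq:soft}. Because the prior factorises over blocks, every block on which $h$ and $h'$ agree contributes the same factor, $\pi_b$ or $1-\pi_b$, to both $\Prob(h\mid d)$ and $\Prob(h'\mid d)$. These factors cancel in the ratio. The log prior ratio is therefore a sum over $b\in\Delta$ only. For each such block, one structure contains $b$ and the other does not, so the term is $\pm\log\big(\pi_b/(1-\pi_b)\big)$, with the sign set by which of $h,h'$ contains $b$.

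Next, each term is bounded using the floor. By construction $\pi_b\in[\eta,1-\eta]$, whatever value $w_b\in[0,1]$ the VLM returned. The map $\pi\mapsto\log(\pi/(1-\pi))$ is increasing, so on this interval it lies in $[\log\frac{\eta}{1-\eta},\log\frac{1-\eta}{\eta}]$. Hence $|\log(\pi_b/(1-\pi_b))|\le\log\frac{1-\eta}{\eta}$, using $\eta<\tfrac12$. The triangle inequality over the $|\Delta|$ terms then gives \cref{eq:overturn}.

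A little care is needed with the detent block. It is a single Bernoulli term whose mass is split evenly over the candidate periods, and for two-motion parts there is a noisy-OR combination. For the first case, I read ``Bernoulli blocks'' as the independent Bernoulli factors. If $h$ and $h'$ differ only in the detent period, the Bernoulli factor is identical and the equal split cancels, so the bound holds with that block not counted in $\Delta$. For the noisy-OR, the combined probability lies in $[\eta,1-\eta]$ only if it is floored after combining; I would state that assumption explicitly. This bookkeeping about what counts as a block is the only delicate point; the inequality itself is elementary.

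Finally, the posterior comparison follows from Bayes' rule in \cref{eq:evidence}:
\[
\log\frac{\Prob(h\mid\data)}{\Prob(h'\mid\data)}=\log\frac{\Prob(h\mid d)}{\Prob(h'\mid d)}+\log\frac{p(y\mid h)}{p(y\mid h')}.
\]
The first term is at least $-|\Delta|\log\frac{1-\eta}{\eta}$ by \cref{eq:overturn}. So if the Bayes factor exceeds $((1-\eta)/\eta)^{|\Delta|}$, the sum is positive and the posterior prefers $h$, whatever declaration produced the prior.
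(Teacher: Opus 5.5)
Your proof is correct and follows the paper's argument. The prior ratio reduces to a product over $\Delta$ of odds factors $\pi_b/(1-\pi_b)$ or their reciprocals, each lying in $[\eta/(1-\eta),(1-\eta)/\eta]$ by the floor, and Bayes' rule then adds the log Bayes factor. Your extra bookkeeping on the detent split and the noisy-OR goes beyond the paper's proof, which treats \cref{eq:soft} as a plain product of floored Bernoulli terms and notes separately that a detent present-versus-absent difference costs one more factor, equal to the number of candidate periods.
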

\begin{proof}
The prior ratio is $\prod_{b\in h\setminus h'}\frac{\pi_b}{1-\pi_b}\prod_{b\in h'\setminus h}\frac{1-\pi_b}{\pi_b}$.
Since $\pi_b\in[\eta,1-\eta]$, each factor lies in $[\eta/(1-\eta),(1-\eta)/\eta]$. Taking logs gives
\cref{eq:overturn}. By Bayes' rule the posterior odds are the prior odds times the Bayes factor.
\end{proof}

With $\eta=0.05$, each disputed block costs a Bayes factor of at most $19$, about $2.9$ nats. A detent
difference costs one more factor, equal to the number of candidate periods. Combining \cref{prop:overturn}
with \cref{eq:bic} gives the number of steps touch needs.

\begin{corollary}[Steps to overturn a claim]
\label{cor:steps}
Let $h$ add the blocks $\Delta$ to $h'$ and reduce the residual variance by the factor
$\hat\sigma_h^2/\hat\sigma_{h'}^2<1$. Then, up to $O(1)$ terms, the posterior prefers $h$ once
\begin{equation}
  n\;\ge\;\frac{2|\Delta|\log\frac{1-\eta}{\eta}+|\Delta|\log n}{\log(\hat\sigma_{h'}^2/\hat\sigma_h^2)} .
  \label{eq:steps}
\end{equation}
\end{corollary}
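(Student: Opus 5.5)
The plan is to combine the posterior-odds decomposition from \cref{prop:overturn} with the asymptotic Bayes factor \cref{eq:bic}.

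\textbf{Step 1: the condition on the log Bayes factor.} By Bayes' rule,
\[
\log\frac{\Prob(h\mid\data)}{\Prob(h'\mid\data)}=\log\frac{\Prob(h\mid d)}{\Prob(h'\mid d)}+\log\frac{p(y\mid h)}{p(y\mid h')} .
\]
By \cref{eq:overturn} the prior term is at least $-|\Delta|\log\frac{1-\eta}{\eta}$, whatever the declaration. So it suffices that the log Bayes factor exceed $|\Delta|\log\frac{1-\eta}{\eta}$. This is the worst case, in which the VLM weighted every disputed block toward $h'$ as strongly as the floor allows.

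\textbf{Step 2: substitute the Bayes factor.} Since $h$ adds exactly the blocks $\Delta$ to $h'$, I will take $d_h-d_{h'}=|\Delta|$, each block contributing one column. \Cref{eq:bic} then gives
\[
\log\frac{p(y\mid h)}{p(y\mid h')}=\tfrac n2\log(\hat\sigma_{h'}^2/\hat\sigma_h^2)-\tfrac{|\Delta|}{2}\log n+O(1).
\]
The sufficient condition becomes
\[
\tfrac n2\log(\hat\sigma_{h'}^2/\hat\sigma_h^2)\ge|\Delta|\log\tfrac{1-\eta}{\eta}+\tfrac{|\Delta|}2\log n ,
\]
up to $O(1)$. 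The logarithm on the left is positive because the variance ratio is below one. Multiplying by $2$ and dividing by that logarithm gives \cref{eq:steps}.

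\textbf{Main obstacle: the bookkeeping of block count versus column count.} The detent block carries a period factor. As remarked after \cref{prop:overturn}, this adds a constant $\log(\#\text{periods})$, which can be absorbed into the $O(1)$ terms. A gravity block contributes two columns, $\cos q$ and $\sin q$. I will either state that $|\Delta|$ counts coefficients in the Occam term, or note that the extra $\tfrac12\log n$ per additional column has the same form. In that case the bound holds with $|\Delta|$ replaced by the column count in the $\log n$ term.

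I will also note two limitations of the statement. The bound is implicit in $n$, since $\log n$ appears on the right, but it holds for all large $n$ because the left side grows linearly. And it treats the residual variances $\hat\sigma^2$ as fixed, which is the same large-$n$ regime in which \cref{eq:bic} was derived.
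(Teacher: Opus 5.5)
Your proposal is correct and follows the paper's own argument: bound the prior odds using \cref{eq:overturn}, substitute \cref{eq:bic} with $d_h-d_{h'}=|\Delta|$, and solve for $n$. Your notes on the detent-period constant and on the two-column gravity block are sound refinements of bookkeeping that the paper's one-line proof glosses over.
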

\begin{proof}
Set the right side of \cref{eq:bic}, with $d_h-d_{h'}=|\Delta|$, above the bound of \cref{eq:overturn} and solve
for $n$.
\end{proof}
The cost of a wrong claim is thus a bounded number of extra steps. With a hard choice ($\eta=0$) the right side
is infinite: a block the VLM missed can never be found. \Cref{sec:exp-g3} shows both behaviours on a
soft-close door whose damper the VLM weighted just below one half.

\paragraph{The first motion.} The floor makes a wrong claim recoverable, but the first motion comes before
any evidence. If the VLM wrongly declares a spring, a feed-forward built on the most probable structure pushes
against a spring that is not there. For the first second of motion the robot therefore feeds forward the
structure that would move the part slowest over the next $0.5$\,rad, among those with at least $0.1$ times the
top posterior, and uses the 10\% quantile of the inertia for its gains.

\subsection{Range}

A VLM sees how far a part \emph{could} travel, not a stop hidden inside that travel, and it states its guesses
too confidently~\citep{groot2024overconfidence}. We keep the guess as the centre and widen the spread:
\begin{equation}
  q_s\sim\N(\hat r,\sigma^2),\qquad \sigma=\max(\hat\sigma,\ \kappa\hat r),\qquad \kappa=0.5,
  \label{eq:calib}
\end{equation}
with $\hat r$ clipped to the range of stops the agent represents. The speed governor of \cref{sec:control}
bounds speed by the 1\% quantile of the stop. With the VLM's own spread (typically $\pm0.3$\,rad) that quantile
lies near the guess, so the robot moves fast toward a stop that may be much nearer. With \cref{eq:calib} the
quantile starts near the closed position and rises only as motion truncates the posterior.

\subsection{Locks}

The lock weights set the prior over $\mathcal C_x$ in \cref{eq:lockset}:
\begin{equation}
  \begin{gathered}
  \Prob(\free)=\pi_f,\qquad \Prob(\jammed)=\pi_j,\qquad \Prob(\unseen)=(1-\pi_f-\pi_j)\,\mu,\\
  \Prob(y,\tau)=(1-\pi_f-\pi_j)(1-\mu)\,\frac{\tilde w_{xy}}{\sum_{y'}\tilde w_{xy'}}\,\frac1{|\mathcal T|},
  \qquad \tilde w_{xy}=\kappa_y\max(\eta,\ \omega_{xy}),
  \end{gathered}
  \label{eq:lockprior}
\end{equation}
with $\pi_f=0.45$, $\pi_j=0.04$, a weight $\kappa_y$ for the kind of releaser (knobs release more often than
doors), and the same floor $\eta$. The parameter $\mu$ is the probability that a lock's releaser is missing from
the declared motions. We measured it in kitchens: with two random views of a development kitchen and Claude as
the VLM, the releaser was missing in 26 of 65 locks, so $\mu=0.40$. With $\mu=0$ the prior is that of a closed
world.

%=====================================================================
\section{Unseen parts and motions}
\label{sec:unseen}

\begin{lemma}[Touch cannot see an unseen releaser]
\label{lem:unseen}
Let every opening attempt on motion $x$ be made with the motions outside $\mathcal K$ at rest. Then for any
sequence of attempts,
\begin{equation}
  \frac{\Prob(\unseen\mid\data)}{\Prob(\jammed\mid\data)}\;=\;\frac{\Prob(\unseen)}{\Prob(\jammed)} .
  \label{eq:ratio}
\end{equation}
\end{lemma}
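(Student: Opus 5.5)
The plan is to show that \unseen{} and \jammed{} predict the same outcome on every attempt covered by the hypothesis. Their likelihoods then agree, and Bayes' rule leaves their ratio at its prior value. The lock posterior of \cref{sec:events} is the prior times a product over attempts of $(1-\epsilon)^{[o=r_c]}\epsilon^{[o\neq r_c]}$. So it suffices to prove that $r_{\unseen}(q_{-x})=r_{\jammed}(q_{-x})$ for every attempt in the data. Then the two likelihood products are identical factor by factor, whatever the observed outcomes $o$ are, including misread ones. The normalising constant of the posterior cancels in the ratio, and \cref{eq:ratio} follows.

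The key step is to pin down what \unseen{} predicts. By \cref{eq:lockset}, \unseen{} says that $x$ is latched and that its releaser is some motion $y\notin\mathcal K$ with some level $\tau$. The release condition is $q_y>\tau$. Under the hypothesis of the lemma, every motion outside $\mathcal K$ is at rest during every attempt. I would read ``at rest'' as sitting in its closed zone, $q_y=0$, which is where the robot never disturbs it. Since levels in $\mathcal T$ are nonnegative, $q_y=0\le\tau$, so the releaser is never past its level and \unseen{} predicts no release: $r_{\unseen}=0$. \jammed{} predicts $r_{\jammed}=0$ by definition. The two predictions therefore coincide on every attempt.

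The main obstacle is this step, because \unseen{} is really a composite hypothesis over unknown $(y,\tau)$. The plan handles it by showing that each component predicts $0$ on the data, so any mixture over components does too. I would state explicitly the assumption that an undisturbed undeclared motion sits at its closed position, below every release level. I would also remark that the argument is independent of $\epsilon$, of the number of attempts, and of the other motions' configuration $q_{-x}$ within $\mathcal K$. Those only change the factors common to both hypotheses.

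Finally, I would note the consequence. Since $\Prob(\unseen)>0$ whenever $\mu>0$ by \cref{eq:lockprior}, the posterior of \jammed{} can never exceed $\pi_j/(\pi_j+(1-\pi_f-\pi_j)\mu)$ relative to the pair. Touch alone therefore cannot certify impossibility.
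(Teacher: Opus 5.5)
Your proof is correct and follows the paper's own argument. Both \unseen{} and \jammed{} predict no release on every attempt, so their likelihoods agree factor by factor and Bayes' rule scales the two by the same constant; the paper simply asserts that a motion the robot never moves stays below its level, which is the assumption you state explicitly as $q_y=0\le\tau$, and your remarks on the composite nature of \unseen{} and independence from $\epsilon$ are sound refinements rather than a different route.
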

\begin{proof}
Under \jammed{} no configuration releases $x$. Under \unseen{} the releaser is a motion the robot never moves,
so it stays below its level, and no configuration of the known motions releases $x$ either. The two hypotheses
give every outcome of every attempt the same likelihood, so Bayes' rule multiplies both by the same factor.
\end{proof}

This is a standard case of two faults that no available observation
separates~\citep{sampath1995diagnosability}. The lemma covers both kinds of miss: a releaser on a part the VLM
did not declare, and a motion the VLM did not declare on a part it did, such as the lift of a drawer it
described as only sliding.

\begin{corollary}[Touch cannot certify impossibility]
\label{cor:refuse}
Once \free{} and every hypothesis $(y,\tau)$ with a known allowed releaser are ruled out, the posterior
probability of \unseen{} relative to \unseen{} and \jammed{} together is
$\frac{(1-\pi_f-\pi_j)\mu}{(1-\pi_f-\pi_j)\mu+\pi_j}$, which is $0.84$ with our values. A closed-world agent would
declare the motion jammed. Only a look can change this ratio.
\end{corollary}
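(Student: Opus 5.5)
The argument conditions on the event that \free{} and every hypothesis $(y,\tau)$ with an allowed releaser $y\in\mathcal K$ have been ruled out. We then apply \cref{lem:unseen} to the two hypotheses that remain able to explain the data. The claim has three parts. The first is the value of the posterior ratio. The second is the closed-world comparison. The third is that no push can move the ratio.

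\textbf{Step 1: the ratio equals the prior ratio.} ``Ruled out'' means that these hypotheses have posterior mass zero. In the idealised limit this holds exactly; with the misreading probability $\epsilon$ it holds up to a factor that vanishes as attempts accumulate. Once they are gone, the posterior over $\mathcal C_x$ is concentrated on $\{\jammed,\unseen\}$, together with any $(y,\tau)$ whose releaser is forbidden. We do not need to track the forbidden ones, because the statement concerns only the relative mass of \unseen{} among \unseen{} and \jammed{}.

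All attempts were made with the motions outside $\mathcal K$ at rest, since the robot cannot move a motion it does not know. So \cref{eq:ratio} applies, and
\[
\frac{\Prob(\unseen\mid\data)}{\Prob(\unseen\mid\data)+\Prob(\jammed\mid\data)}
=\frac{\Prob(\unseen)}{\Prob(\unseen)+\Prob(\jammed)} .
\]

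\textbf{Step 2: substitute the prior.} Inserting \cref{eq:lockprior} gives $\Prob(\unseen)=(1-\pi_f-\pi_j)\mu$ and $\Prob(\jammed)=\pi_j$, which is the stated expression. With $\pi_f=0.45$, $\pi_j=0.04$ and $\mu=0.40$, we have $(1-\pi_f-\pi_j)\mu=0.51\cdot0.40=0.204$. The ratio is then $0.204/0.244\approx0.836$, which rounds to $0.84$.

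\textbf{Step 3: the closed-world comparison.} A closed-world agent has $\mu=0$, so \unseen{} has no prior mass. The same conditioning then leaves all the remaining mass on \jammed{} (or on forbidden releasers), and the agent declares the motion jammed.

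\textbf{Step 4: only a look changes the ratio.} This follows from \cref{lem:unseen}: the ratio is invariant under any sequence of opening attempts. It can change only when $\mathcal K$ grows, that is, when a look declares a new motion. The new motion can then be moved, or it enters $\mathcal C_x$ as a new $(y,\tau)$ hypothesis that splits the mass of \unseen{}.

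The one delicate point is the meaning of ``ruled out''. With $\epsilon>0$, posteriors never reach exactly zero. I would state the result as the limit in which the masses of the eliminated hypotheses go to zero, and note that \cref{eq:ratio} holds exactly at every finite stage regardless. The displayed value is therefore the limit of the relative mass; the invariance itself needs no approximation.
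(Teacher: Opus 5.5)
Your proposal is correct and follows the paper's route: the corollary is \cref{lem:unseen}, which keeps the odds of \unseen{} to \jammed{} at their prior value for any sequence of attempts, combined with the prior \cref{eq:lockprior}, giving $0.204/0.244\approx0.84$. Two small refinements: by the lemma, the relative mass of \unseen{} within \unseen{} and \jammed{} equals the displayed value exactly at every stage, not only in the limit where the other hypotheses vanish; and in the paper's model even an empty look changes the ratio, by the factor $1-\rho$ of \cref{eq:emptylook}, so it is not true that the ratio moves only when $\mathcal K$ grows.
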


\paragraph{An empty look.} If a releaser exists outside $\mathcal K$, one look declares it with probability
$\rho$. A look that shows nothing new has likelihood $1-\rho$ under \unseen{} and $1$ under every other
hypothesis. After $k$ empty looks, starting from $P_0=\Prob(\unseen\mid\data)$,
\begin{equation}
  \Prob(\unseen\mid\data,\ k\text{ empty looks})\;=\;\frac{P_0(1-\rho)^k}{P_0(1-\rho)^k+1-P_0}.
  \label{eq:emptylook}
\end{equation}
In kitchens we measured $\rho=0.28$: the probability that one head-camera look shows a given missed releaser
and the VLM declares it. Bringing $P_0=0.84$ below $\gamma=0.2$ then takes
$k\ge\log\!\big(\tfrac{P_0}{1-P_0}\tfrac{1-\gamma}{\gamma}\big)/\log\tfrac1{1-\rho}\approx 9.3$, that is, ten empty
looks. Within a 30\,s task this is rarely possible, so the refusal rule below also accepts the case that no
further look can be made. A missed motion is different: a closer picture of the part showed it in every case
we tested, so $\rho$ is close to one and a single look settles the question.

\paragraph{The value of a look.} Split the belief into the branch where something undeclared locks the target
and the rest. On the \unseen{} branch the robot cannot succeed without a look, since by \cref{lem:unseen} touch
never finds the releaser. With a look it succeeds if the look reveals the releaser (probability $\rho$) and the
robot then finishes the task with it known (probability $s$). On every other branch a look changes nothing except
its cost $c$, expressed in units of success probability. The gain in success probability is therefore
\begin{equation}
  V_{\mathrm{look}}\;=\;\Prob(\unseen\mid\data)\,(\rho s-0)\;-\;c\;=\;\rho\,s\,\Prob(\unseen\mid\data)-c .
  \label{eq:lookvalue}
\end{equation}
The robot looks when $\Prob(\unseen\mid\data)>c/(\rho s)$. In kitchens, with $c=0.05$ (chosen on development
kitchens), $\rho=0.28$ and $s=0.87$, the threshold is $0.205$. The prior value of \unseen{} is
$0.51\times0.40=0.204$, just below it. The first attempt that finds the motion locked nearly rules out \free{} and
raises $\Prob(\unseen\mid\data)$ to about $0.204/0.55=0.37$, so a look becomes worth asking at once.

\paragraph{Looks take time.} A VLM answers in seconds, while control runs every 10\,ms. The robot does not
wait. It asks, keeps working on its current plan, and keeps holding what it holds; the answer arrives after a
latency $L$. A releaser revealed at time $t+L$ must still be worked in the time left before the budget $T$ ends.
The value of a look asked at time $t$ is then
\begin{equation}
  V_{\mathrm{look}}(t)\;=\;\rho\,s(T-t-L)\,\Prob(\unseen\mid\data_t)-c,
  \label{eq:lookvaluetime}
\end{equation}
where $s(\tau)$ is the success rate with a known releaser and $\tau$ seconds left. We measured $s=0.14$, $0.31$,
$0.52$, $0.77$, $0.84$ and $0.87$ for $\tau=5$, $10$, \dots, $30$\,s. With $T=30$\,s and $L=10$\,s, a look asked at
$t=10$\,s is worth asking only if $\Prob(\unseen\mid\data)>0.05/(0.28\times0.31)=0.58$. A slow VLM must be asked
early or not at all, and beyond some latency no look is worth asking.

\paragraph{When to refuse.} Let $\mathcal A_x$ be the motions the robot may use and $\bar q_y$ the end of motion
$y$'s travel. The robot declares the task on motion $x$ impossible only if
\begin{equation}
  \Prob(\free\mid\data)+\sum_{y\in\mathcal A_x}\sum_{\tau<\bar q_y}\Prob\big((y,\tau)\mid\data\big)<\delta_{\mathrm{inf}}
  \quad\text{and}\quad
  \Big(\Prob(\unseen\mid\data)<\gamma\ \text{ or no look is left}\Big).
  \label{eq:refuse}
\end{equation}
The first condition is the closed-world rule: no known allowed way to release the motion remains. The second is
required by \cref{cor:refuse}. A verdict, once reached, stands until the motion is seen to move. A look that
brings only phantoms counts as empty.

%=====================================================================
\section{Deciding and acting}
\label{sec:control}

\subsection{Where to aim}

Under the posterior the target $q^\star$ is a random variable. It may have several peaks (two candidate detent
periods give two click positions) or be undefined (no click). With tolerance $\varepsilon$, the robot chooses the
decision most likely to be right:
\begin{equation}
  a^\star=\argmax_{a\in\{\text{declare}\}\cup\R}
  \begin{cases}\Prob(q^\star\text{ undefined}\mid\data), & a=\text{declare},\\
  \Prob(|q^\star-a|\le\varepsilon\mid\data), & \text{otherwise.}\end{cases}
  \label{eq:aim}
\end{equation}
We compute it from posterior samples as the candidate with the most samples within $\varepsilon$. The posterior
mean would be a poor aim: halfway between two click positions is certainly wrong.

\subsection{When to experiment}

An \emph{experiment} $e$ is a short closed-loop behaviour, such as a sweep toward the stop. Committing now
succeeds with probability $U_0=\max_a\Prob_b(\text{success}\mid a)$ under the belief $b$. Running $e$ first,
observing its outcome $y$, and then committing has the value
\begin{equation}
  U(e)\;=\;\E_{y}\Big[\max_a\Prob_{b\oplus(e,y)}(\text{success}\mid a)\Big]\;-\;\lambda_t\,\E[\tau_e]\;-\;\lambda_r\,\mathrm{risk}(e),
  \label{eq:voi}
\end{equation}
where $b\oplus(e,y)$ is the updated belief, $\tau_e$ the duration and $\mathrm{risk}(e)$ the probability that the
experiment itself breaks a constraint. The robot runs the best experiment if $U(e)>U_0$.

\begin{proposition}[Information cannot lower the chance of success]
\label{prop:voi}
The first term of \cref{eq:voi} is at least $U_0$.
\end{proposition}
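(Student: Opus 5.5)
This is the standard convexity argument for the value of information: the expectation of a maximum is at least the maximum of the expectation, combined with the fact that the current belief is the average of the updated beliefs. I would proceed in three steps.

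\emph{Step 1: posteriors average to the prior.} For a fixed experiment $e$, the outcome $y$ is distributed according to the predictive distribution $\Prob_b(y\mid e)$ under the current belief $b$. By Bayes' rule, $b\oplus(e,y)$ is the conditional belief given $y$. Hence for every action $a$,
\begin{equation*}
\E_y\big[\Prob_{b\oplus(e,y)}(\text{success}\mid a)\big]=\Prob_b(\text{success}\mid a).
\end{equation*}
This is the law of total probability, with success under $a$ treated as an event about the unknown law (structure, parameters, stop and lock hypothesis). The expectation over $y$ must be taken under the same belief $b$ that defines $U_0$. I would state this explicitly, since it is where the argument would fail if outcomes were drawn from a different model.

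\emph{Step 2: exchange max and expectation.} Fix any action $a_0$. For every outcome $y$,
\begin{equation*}
\max_a\Prob_{b\oplus(e,y)}(\text{success}\mid a)\ \ge\ \Prob_{b\oplus(e,y)}(\text{success}\mid a_0).
\end{equation*}
Take expectations over $y$ and apply Step 1. The first term of \cref{eq:voi} is then at least $\Prob_b(\text{success}\mid a_0)$. Since $a_0$ is arbitrary, take it to be the maximiser that attains $U_0$ (or take the supremum if none exists). This gives the claim.

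\emph{The main obstacle.} The difficulty is not the inequality but making Step 1 legitimate in this setting. After the experiment the time budget is shorter by $\tau_e$, and the experiment may move the part. So ``success under $a$'' after $e$ is not literally the same event as committing now. Two cases need care:
\begin{itemize}
\item \emph{Time and risk.} The proposition concerns only the first term, and \cref{eq:voi} charges time and risk separately through $\lambda_t$ and $\lambda_r$. I would therefore read $\Prob_{b\oplus(e,y)}(\text{success}\mid a)$ as the success probability of the same committed action, evaluated under the updated belief about the law. This is the idealisation the paper's formula implicitly makes.
\item \emph{Set of actions.} The set of actions available after the experiment must contain the actions available now, so that the fixed $a_0$ remains admissible.
\end{itemize}
With these two conventions stated, Steps 1 and 2 complete the proof. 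Equality holds when the same action is optimal for every outcome, for instance when $e$ is uninformative, as on an aliased direction of \cref{lem:alias}.
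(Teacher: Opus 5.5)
Your proof is correct and follows the paper's own argument: the paper bounds $\E_y[\max_a\Prob_{b\oplus(e,y)}(\text{success}\mid a)]$ below by $\max_a\E_y[\Prob_{b\oplus(e,y)}(\text{success}\mid a)]$ and then applies the tower property, which says the updated beliefs average back to $b$; these are your Steps~2 and~1, and fixing $a_0$ then taking the supremum is just the proof of that exchange of max and expectation. The conventions you add are the idealisation the paper makes implicitly: outcomes are drawn from the predictive under $b$, and the post-experiment term is read as the success of the same committed action under the updated belief, with time and risk charged separately. They clarify the argument rather than change its route.
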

\begin{proof}
$\E_y[\max_a\Prob_{b\oplus(e,y)}(\text{success}\mid a)]\ge\max_a\E_y[\Prob_{b\oplus(e,y)}(\text{success}\mid a)]
=\max_a\Prob_b(\text{success}\mid a)$. The inequality holds because the maximum of expectations is at most the
expectation of the maximum. The equality is the tower property: the belief after the experiment, averaged over
its outcome, is the belief before.
\end{proof}
An experiment is therefore chosen only when the success it adds outweighs its time and risk. For ``open it
fully'', every plausible law has the same target, the stop, and pushing to the stop is the task itself;
\cref{eq:voi} declines every experiment. For ``halfway'', committing aims at half of a vague range guess, while a
sweep to the stop makes the aim precise; \cref{eq:voi} runs the sweep. Unlike expected information
gain~\citep{rainforth2024bed}, \cref{eq:voi} does not value learning the inertia when no instruction depends on
it. We estimate it by simulating each experiment under $S$ posterior samples, giving predicted outcomes $\hat
y_s$, and treating each sample $r$ in turn as the truth, with weights
$w_{rs}\propto\exp(-\|\hat y_r-\hat y_s\|^2/2\sigma_y^2)$ on the other samples.

\subsection{Which releaser to try}

For a latched motion, an option $o$ is ``try the motion now'' or ``drive motion $y$ to a level, then try''. It
releases with probability $p_o$ from \cref{eq:release} and takes expected time $c_o$.

\begin{proposition}[Search order~\citep{stone1975search}]
\label{prop:search}
If the options test disjoint hypotheses and a failed attempt is conclusive, the expected time to release is
minimised by trying options in decreasing order of $p_o/c_o$.
\end{proposition}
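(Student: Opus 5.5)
We prove \cref{prop:search} with the classical adjacent-interchange argument for sequencing under independent success probabilities, in the form given by \citet{stone1975search}. Fix an order $o_1,\dots,o_K$ of the options. The options test disjoint hypotheses and a failed attempt is conclusive, so option $o_k$ is tried exactly when none of $o_1,\dots,o_{k-1}$ has released. Under the current posterior this event has probability $1-\sum_{i<k}p_{o_i}$, since disjointness makes the release events of the options mutually exclusive. Each $p_o$ is read from \cref{eq:release}, summing over the hypotheses that option $o$ would confirm. The expected time to release, or to exhaust the list, is then
\begin{equation*}
  \E[T]\;=\;\sum_{k=1}^{K} c_{o_k}\Big(1-\sum_{i<k}p_{o_i}\Big).
\end{equation*}
The first step is to write this formula down carefully. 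In particular, we must check that conclusiveness is what makes the release probabilities not change after failures. A failure only removes mass from the other options' hypotheses, and we sum the original masses, so the formula holds.

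Next, compare two orders that differ only by swapping adjacent options $a$ and $b$ at positions $k$ and $k+1$. Let $S=\sum_{i<k}p_{o_i}$. All other terms of $\E[T]$ are unchanged. With $a$ first, the two affected terms contribute $c_a(1-S)+c_b(1-S-p_a)$. With $b$ first, they contribute $c_b(1-S)+c_a(1-S-p_b)$. The difference is $c_a p_b - c_b p_a$. So putting $a$ before $b$ is no worse exactly when $p_a/c_a\ge p_b/c_b$, assuming $c_o>0$.

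Finally, take any optimal order; one exists because the set of orders is finite. If it contains an adjacent pair out of decreasing $p_o/c_o$ order, swapping that pair does not increase $\E[T]$. Repeating this bubble-sort argument reaches the decreasing-ratio order without increasing cost, so that order is optimal, and ties do not matter.

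The main obstacle is the modelling step: making precise that "try the motion now" and "drive $y$ to a level, then try" test disjoint sets of lock hypotheses. It must also be clear what happens to the \unseen{} and \jammed{} mass, which no option releases. That mass simply appears as the residual $1-\sum p_o$. It adds no terms to the ordering argument, because it only makes the list be exhausted without release.
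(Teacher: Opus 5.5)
Your proposal is correct and follows the paper's own proof: the same expression for $\E[T]$, the same adjacent swap giving the condition $c_ap_b\le c_bp_a$, and a bubble-sort conclusion that you spell out a little more carefully than the paper's last sentence does. One wording slip: a conclusive failure removes the mass of the \emph{failed} option's own hypotheses, not the other options'. That is exactly why the later options keep their original unnormalised masses $p_o$, and why your formula still holds.
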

\begin{proof}
For an order $\sigma$, attempt $r$ is made only if all earlier ones failed, so
$\E[T_\sigma]=\sum_r c_{\sigma(r)}\big(1-\sum_{r'<r}p_{\sigma(r')}\big)$. Swap two adjacent options, $a$ then $b$,
that follow earlier options with total release probability $P$. Only their two terms change:
$c_a(1-P)+c_b(1-P-p_a)$ becomes $c_b(1-P)+c_a(1-P-p_b)$. The first order is no worse if and only if $c_ap_b\le
c_bp_a$, that is, $p_a/c_a\ge p_b/c_b$. Any order not sorted by this index has an adjacent pair whose swap does
not increase the expected time.
\end{proof}
The robot applies this rule greedily, recomputes the posterior after every attempt, and recurses when the chosen
releaser is itself latched. Between parts, an option drives the releaser to the end of its travel. Within a part
the releaser is the part's other motion, and the options drive it to increasing levels, since the hand that
holds the releaser must also move the latched motion. A wrong lock weight from the VLM only changes $p_o$, and so
the order of the search. The floor in \cref{eq:lockprior} keeps every known releaser in it.

\subsection{Tracking, holding and a speed governor}

Given a velocity set-point $\vref$ and the point estimate $\hat\psi$, the tracking loop applies
\begin{equation}
  u\;=\;u_{\mathrm{ff}}(q,\vref)\;+\;K_p e+K_i\!\int\! e\,dt,
  \qquad e=\vref-\dot q,
  \label{eq:tracking}
\end{equation}
where the feed-forward $u_{\mathrm{ff}}$ is the force the identified law needs to move at $\vref$; for one
coordinate, $u_{\mathrm{ff}}=V'_{\hat\psi}(q)+\hat F_c\tanh(\vref/\epsilon_v)+\hat b\,\vref$. The gains
$K_p=\hat M/T_v$ and $K_i=\hat M/T_v^2$ come from the identified inertia, so the loop responds in about $T_v$
whatever the part weighs. When the law estimate changes, the integrator state is recomputed so that the applied
force does not jump.

For parts with two motions, two more rules apply. First, while the robot moves a latched motion, it \emph{holds}
the releaser where it worked; a hand that let go would let the catch close again. Second, the feed-forward uses
the law averaged over structures,
\begin{equation}
  u_{\mathrm{ff}}(q,\vref)\;=\;\sum_h\Prob(h\mid\data)\,u_{\mathrm{ff},h}(q,\vref),
  \label{eq:average}
\end{equation}
rather than the most probable structure. This force is continuous in the posterior. It matters because of case
(iii) of \cref{lem:alias}. While the handle of a lever door is held still, a cross spring looks exactly like a
constant force on the door, and the data never rule it out. When the door's long motion finally favours the
coupled structure, a feed-forward built on the most probable structure would put the new cross term into the
handle's force at once.

A sampling planner~\citep{williams2017mppi} chooses set-points that reach the target on average over $S$ sampled
laws, and keeps the conditional value at risk~\citep{rockafellar2000cvar} of speed and impact costs small over the
worst 25\% of them. A planner that demands safety under every plausible law stalls in front of an unknown stop,
because some posterior mass always lies just ahead. Following reference governors~\citep{garone2017refgov}, a
governor clips each coordinate's set-point instead. Let $\underline q_s=Q_{0.01}(q_s\mid\data)$ be the 1\% quantile
of the stop, $a_{\mathrm{dec}}$ the deceleration the loop can achieve, and $v_c<v_{\mathrm{imp}}$ a creep speed.
Then
\begin{equation}
  \vref\;\le\;\bar v(q)\;=\;\max\Big(v_c,\ \sqrt{2a_{\mathrm{dec}}\,(\underline q_s-q)_+}\Big).
  \label{eq:governor}
\end{equation}
Along the curve $\dot q=\sqrt{2a_{\mathrm{dec}}(\underline q_s-q)}$ the deceleration is
$\ddot q=\frac{d\bar v}{dq}\,\dot q=-a_{\mathrm{dec}}$. The bound is therefore a braking curve that the loop can
follow.

\begin{proposition}[Impact chance constraint]
\label{prop:governor}
Suppose the loop keeps $\dot q\le\vref+\delta$ and can follow braking at $a_{\mathrm{dec}}$, that
$v_c+\delta\le v_{\mathrm{imp}}$, and that $\underline q_s$ does not decrease during the motion. Then under the
robot's posterior the probability of meeting the stop faster than $v_{\mathrm{imp}}$ is at most $0.01$.
\end{proposition}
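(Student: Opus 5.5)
The argument splits on the event $E=\{q_s\ge\underline q_s^{(0)}\}$, where $\underline q_s^{(0)}$ is the 1\% quantile of the stop under the posterior at the start of the motion. By definition of the quantile, $\Prob(E^c\mid\data)\le 0.01$. It therefore suffices to show that on $E$ the part meets its stop at speed at most $v_{\mathrm{imp}}$. The probability is taken under the robot's posterior at the start, and the hypothesis that $\underline q_s$ does not decrease is what lets a single quantile govern the whole motion.

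The main step is a deterministic bound on $E$. Write $\underline q_s(t)$ for the governor's quantile at time $t$. By hypothesis $\underline q_s(t)\ge\underline q_s^{(0)}$, but on $E$ we only know $q_s\ge\underline q_s^{(0)}$. Consider the moment the coordinate reaches the stop, so that $q=q_s$ and hence $q\ge\underline q_s^{(0)}$.

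The difficulty is that the quantile may have risen above $q_s$, since later evidence can move the posterior. I would handle this by noting that motion truncation keeps the stop posterior supported on $q_s\ge q_{\max}$. Blocking only pins it near the current $q$, and $\underline q_s(t)$ stays a quantile of a posterior that has not yet seen the contact. I expect this to be the main obstacle. The cleanest route is to read the statement as conditioning on the true stop lying above every quantile the governor used, which on $E$ holds at least for the initial one. I would then bound the speed using only $\underline q_s^{(0)}$.

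The speed bound itself comes from the braking curve. Before $q$ reaches $\underline q_s^{(0)}$, the set-point satisfies $\vref\le\max\big(v_c,\sqrt{2a_{\mathrm{dec}}(\underline q_s(t)-q)_+}\big)$. The governor clips monotonically: as $q$ approaches the quantile, the envelope $\sqrt{2a_{\mathrm{dec}}(\underline q_s-q)}$ shrinks along a curve with deceleration exactly $a_{\mathrm{dec}}$, as computed after \cref{eq:governor}. Because the loop can follow this deceleration, the envelope is realisable. Together with $\dot q\le\vref+\delta$, this gives $\dot q\le\bar v(q)+\delta$ along the trajectory. At any $q\ge\underline q_s$, and in particular at contact when the quantile has not overshot $q_s$, we have $(\underline q_s-q)_+=0$, so $\bar v=v_c$. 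The contact speed is then at most $v_c+\delta\le v_{\mathrm{imp}}$.

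If instead $\underline q_s(t)>q_s$ at contact, I would argue that this requires the posterior to place less than 1\% mass below the true stop. This is exactly the $E^c$-type event at time $t$, now for the updated posterior. Non-decrease of $\underline q_s$ makes these events nested, so the failure probability is still bounded by $0.01$ under the posterior that generated the bound. Combining the two cases gives the claim.
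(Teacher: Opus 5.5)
Your first case is the paper's whole proof. On $\{q_s\ge\underline q_s\}$ the part reaches $\underline q_s$ along a braking curve it can follow. Beyond that point \cref{eq:governor} caps the set-point at $v_c$, so contact is no faster than $v_c+\delta\le v_{\mathrm{imp}}$, and the complement has posterior probability at most $0.01$.

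The gap is your second case: a quantile that has risen above $q_s$ by the contact time $t_c$.
\begin{itemize}
\item \emph{The nesting goes the wrong way.} The events $A_t=\{q_s<\underline q_s(t)\}$ are nested, but because $\underline q_s$ is non-decreasing they \emph{grow} with $t$. Their union is the latest one, and its probability under the starting posterior is at least that of $A_0$, not at most. A bound of $0.01$ on $A_t$ under the time-$t$ posterior also does not combine across times into a bound under any single posterior.
\item \emph{Your ``cleanest route'' fails for the same reason.} The stop lying above every quantile used is the complement of that union, not of $A_0$, and $E$ does not give it.
\item \emph{The case is not rare.} With truncation as the update, the posterior at time $t$ lives on $[q_{\max}(t),\infty)$, so its $1\%$ quantile lies above $q_{\max}(t)$. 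As $q_{\max}\to q_s$ just before contact, the quantile in use is at or above $q_s$, generically strictly. Your second case is then the typical one, and no appeal to a $1\%$ event can bound it.
\item \emph{You cannot bound the speed using only $\underline q_s^{(0)}$.} The governor brakes toward $\underline q_s(t)\ge\underline q_s^{(0)}$, and a larger quantile only raises $\bar v$. So the braking curve of $\underline q_s^{(0)}$ is not an upper bound on the speed.
\end{itemize}

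The proof closes only if $\underline q_s$ is one number for the whole motion. It must be both the quantile under which the $0.01$ is computed and the one toward which the governor brakes, for instance frozen when the motion starts, which satisfies the non-decrease hypothesis trivially. This is the reading under which the paper's proof works. The paper argues with a single $\underline q_s$ and does not treat a quantile that overtakes the stop either, even though its closing remark has truncation raising $\underline q_s$ during the motion. Under that reading your second case does not arise.

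Non-decrease also does not ``let a single quantile govern the whole motion''. Its use is in your speed paragraph: it keeps the envelope $\bar v$ from moving in beneath the part, so the part always sits on or below a braking curve the loop can follow.

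If you want to keep a rising quantile, your argument gives a bound on the contact speed, $\max\big(v_c,\sqrt{2a_{\mathrm{dec}}(\underline q_s(t_c)-q_s)_+}\big)+\delta$, not a $0.01$ chance constraint.
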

\begin{proof}
With posterior probability at least $0.99$, $q_s\ge\underline q_s$. On that event the part meets the stop at some
$q\ge\underline q_s$. There \cref{eq:governor} gives $\vref\le v_c$, so $\dot q\le v_c+\delta\le v_{\mathrm{imp}}$.
Before $\underline q_s$ the set-point follows the braking curve, which the loop can track, so the part arrives at
$\underline q_s$ no faster than $v_c+\delta$. A fast impact therefore requires $q_s<\underline q_s$, an event of
probability at most $0.01$.
\end{proof}
The guarantee holds under the robot's own posterior; it does not cover arbitrary model error. Motion only
truncates the stop posterior, which raises $\underline q_s$, so the monotonicity assumption holds while the part
moves toward the stop.

%=====================================================================
\section{From the simulator to cameras and touch}
\label{sec:ladder}

The agent of \cref{sec:ident,sec:priors,sec:unseen,sec:control} needs four things from the world: which part
each declared box is, where to grasp it, how its handle moves when its coordinates move (its joint), and the
part's coordinates while it works. In simulation it is easy to take any of these from the simulator without
noticing. \Cref{tab:ladder} states, for each experiment, where each one comes from. Each row removes something
the row above took from the simulator.

\begin{table}[tbp]
\centering
\small
\caption{What the robot takes from the simulator in each experiment. ``Sim.'' means the simulator gives it.
Every experiment is scored by the simulator.}
\label{tab:ladder}
\setlength{\tabcolsep}{4pt}
\renewcommand{\arraystretch}{1.15}
\begin{tabular}{@{}>{\raggedright\arraybackslash}p{3.5cm} *{6}{>{\raggedright\arraybackslash}p{1.75cm}}@{}}
\toprule
Experiment & Parts, motions & Which part a box is & Handle & Joint & Coordinates & Floor map \\
\midrule
Core agent & given & n/a & n/a & sim. & sim. & n/a \\
Kitchens, objects, looks & VLM & sim.\ pixels & n/a & sim. & sim. & n/a \\
Two motions, mixed scenes & VLM & sim.\ pixels & n/a & sim. & sim. & n/a \\
Physical robot & VLM & sim.\ pixels & sim. & sim. & sim. & sim. \\
Handles by depth, joints by touch & VLM & sim.\ pixels & depth camera & touch & from hand & sim. \\
Nothing from the simulator & VLM & the box itself & depth, gripper camera & touch & from hand & own depth map \\
\bottomrule
\end{tabular}
\end{table}

\paragraph{Acting through a body.} In the first four rows the agent's force goes straight onto the part's
coordinates. With a robot arm, a push can go nowhere because the grip slipped, and a latch would produce the same
observation; in our first robot runs this made the agent call free doors jammed. The remedy is to condition on
what the body achieved. The law is fitted to the force that the wrist sensor says arrived at the part, projected
through the part's Jacobian, not to the force commanded. A failed opening attempt counts as evidence of a lock only
in proportion to how far the arrived force exceeded what a free part would need. A release counts only if no slip
was reported. Without a body these rules are inactive, so the agent is unchanged.

\paragraph{Handles from depth.} Inside the VLM's box, the depth image gives the part's face and the handle, a long
thin blob standing out from it. Depth carries the noise of a stereo camera, about 2.5\,mm at 1\,m and 1.3\,cm at
3\,m. At 3\,m this noise hides most handles, so the robot drives to about 1\,m in front of the part and looks again
with a pan--tilt camera on its head.

\paragraph{Joints by touch.} The robot's first model of a joint is a guess from the VLM's words and the depth
image: ``slides front--back'' is a slide into the face; a door's hinge is at the edge of the face farther from the
handle. After it grips, the robot pushes the handle a little along each motion with a soft hand and records where
the hand goes. A straight path is a slide. A path that bends is a hinge, and the bend gives its axis and position.
While it works, the robot reads the part's coordinates from its own hand through this model.

\paragraph{Nothing from the simulator.} The last row removes the rest. A part is one of the VLM's boxes, not a
simulated part. The robot plans where to stand on its own model of the part, for the range such a motion usually
has (a door about 80$^\circ$, a drawer about 30\,cm); the agent then finds the real stops by touch. The floor and
the obstacles come from the robot's own map, built by counting depth points from fixed room cameras and its head
camera in 2.5\,cm cells. The base must stay clear of seen cells, and the arm may not pass through them except at
the handle. A depth camera on the wrist, beside the fingers, guides the grip: with the hand lined up 27\,cm and then
14\,cm in front of the handle, it finds the handle where the model puts it and moves the grip onto it, across the
bar and in depth (\cref{fig:cameras}). The agent is told only what the words say: how many motions, and whether
each turns both ways from rest. The fingertips feel that they touch something, not what. Two failures appeared
only here, both on a door with a lever handle, and led to two rules. Held against its stop, the lever looked to the
law like a stiff spring, so the force the law said it needed grew past the most the robot may push; for the
robot's own parts, a push at 80\% of the most force with no motion is now a stop. At a wide swing the stretched
arm delivered little of the force asked for; asking for full force while the wrist feels little is now the end of
the arm's reach. The tap needed a change of grip. Pinched at its ball tip with the hand's orientation fixed, the ball
rolled between the fingertips as the lever turned, and the robot's reading of the lever crept. The robot now holds the
tip firmly and turns its hand with the lever, as it does on a door, with light pushes (at most 8\,N) for a lever's tip.
Because the hand reads the part's speed late, moves of the robot's own parts are planned at 0.75 of the usual speed.

\begin{figure}[tbp]
\centering
\includegraphics[width=0.46\linewidth]{\sroot 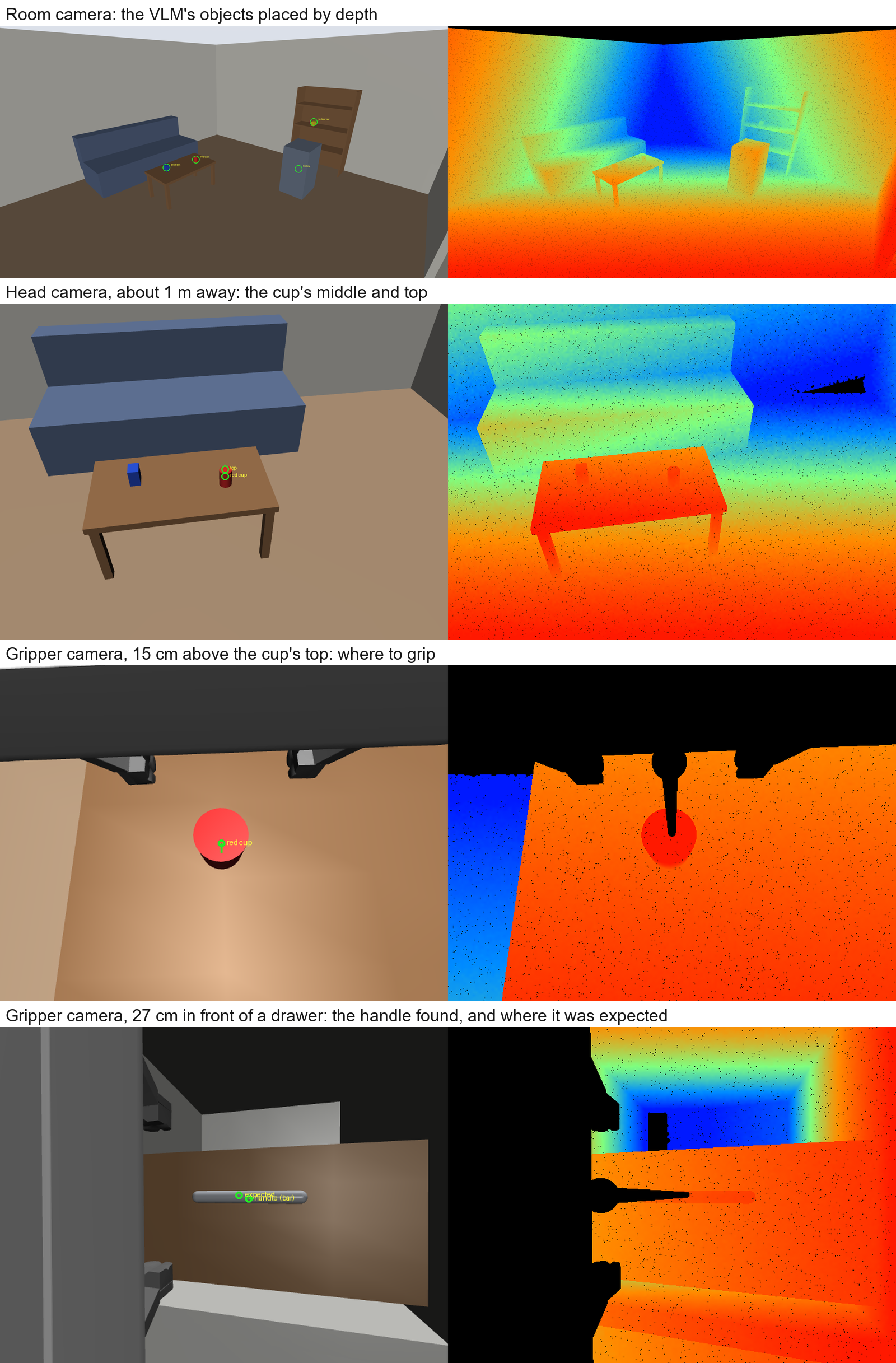}
\caption{What the robot's cameras see when it takes nothing from the simulator. Left: colour, with what the robot
found marked in green. Right: depth, near in red and far in blue, black where there is no reading or where the
robot's own body is. Top to bottom: a room camera, the head camera about a metre from a cup, the gripper camera
above the cup, and the gripper camera in front of a drawer handle.}
\label{fig:cameras}
\end{figure}

%=====================================================================
\section{Experiments}
\label{sec:experiments}

\paragraph{Protocol.} All experiments run in MuJoCo~\citep{todorov2012mujoco}. Hidden laws are injected into the
parts. Kitchens are RoboCasa scenes~\citep{nasiriany2024robocasa}. The VLM is Claude (Opus~5.5), queried through
a file queue whose answers are cached by request. Reruns are therefore exact, but latency is simulated, not
measured. For every experiment we developed on one set of scenes, froze the agent, the prompts and every
parameter, rendered new scenes, and ran each condition once. Paired comparisons use a sign test on the tasks where
two conditions disagree. Tasks of one scene share its views and parts, so we also count scenes where it matters.
Only perception, priors or the body change between conditions; the identifier and the controller stay the same.
The parameters are listed in \cref{app:params}.

\subsection{The core agent with known parts}
\label{sec:exp-core}

We first check the identifier and the decision rules when every part is known. On \emph{single valves} (40
held-out valves, six instructions each, 12\,s episodes), each valve has a spring with probability one half,
detents with one of four periods with probability one half, and a stop in $[1,3]$\,rad (\cref{tab:core}, left).
L2A succeeds in 98\% of episodes with no violations. Without experiments success drops to 61\%, because nothing in
``halfway'' reveals the stop. L2A runs 1.10 experiments per episode on average, about one sweep for ``halfway''
and none for ``fully'', as \cref{eq:voi} predicts. Without robust control (certainty equivalence) the agent breaks a
constraint in 20\% of episodes, mostly by hitting the stop too fast. A policy that imitates a teacher who knows the
law reaches only 36\%. Every ``no click'' refusal by L2A follows an experiment.

On \emph{lockbox cabinets} (30 held-out cabinets, 1,104 episodes), main parts are latched by knobs and levers, in
chains of up to four, with jammed parts and forbidden releasers (\cref{tab:core}, right). L2A succeeds in 96\%. A
language-model planner~\citep{huang2022innermonologue} that sees the same outcomes as text and chooses the next
attempt succeeds in 83\% (L2A wins 26 instances and loses 2, $p=3\times10^{-6}$). It cannot tell a releaser not yet
found from an impossible task, and it refuses correctly in only 14 of 32 impossible cases. An agent without a lock
model succeeds in 72\%.

\begin{table}[tbp]
\centering
\small
\caption{The core agent with known parts, held-out sets. Left: single valves (1,440 episodes). Right: lockbox
cabinets (1,104 episodes); ``Refused'' counts impossible tasks correctly refused.}
\label{tab:core}
\begin{tabular}{@{}l c c@{}}
\toprule
Valves & Success & Violations \\
\midrule
True law & 100\% & 0\% \\
Prior only & 53\% & 0\% \\
Imitation of a teacher & 36\% & 35\% \\
Certainty equivalent & 49\% & 20\% \\
Robust, no experiments & 61\% & 0\% \\
\textbf{L2A} & \textbf{98\%} & \textbf{0\%} \\
\bottomrule
\end{tabular}
\hfill
\begin{tabular}{@{}l c c@{}}
\toprule
Cabinets & Success & Refused \\
\midrule
True law & 100\% & 32/32 \\
\textbf{L2A}, language prior & \textbf{96\%} & 28/32 \\
L2A, uniform lock prior & 96\% & 30/32 \\
Language-model planner & 83\% & 14/32 \\
No lock model & 72\% & 32/32$^\dagger$ \\
\bottomrule
\end{tabular}

\smallskip
{\footnotesize $^\dagger$It refuses every blocked part, and so also fails 50 achievable tasks.}
\end{table}

\subsection{Declarations from images, in kitchens never seen}
\label{sec:exp-kitchens}

Twenty RoboCasa kitchens (114 tasks) were used for development. Twenty others (117 tasks) were rendered only after
the prompts and the calibration rule were frozen. Each task is one instruction on one part. Some parts are jammed,
and then success means refusing.

\begin{figure}[tbp]
\centering
\includegraphics[width=0.92\linewidth]{\sroot 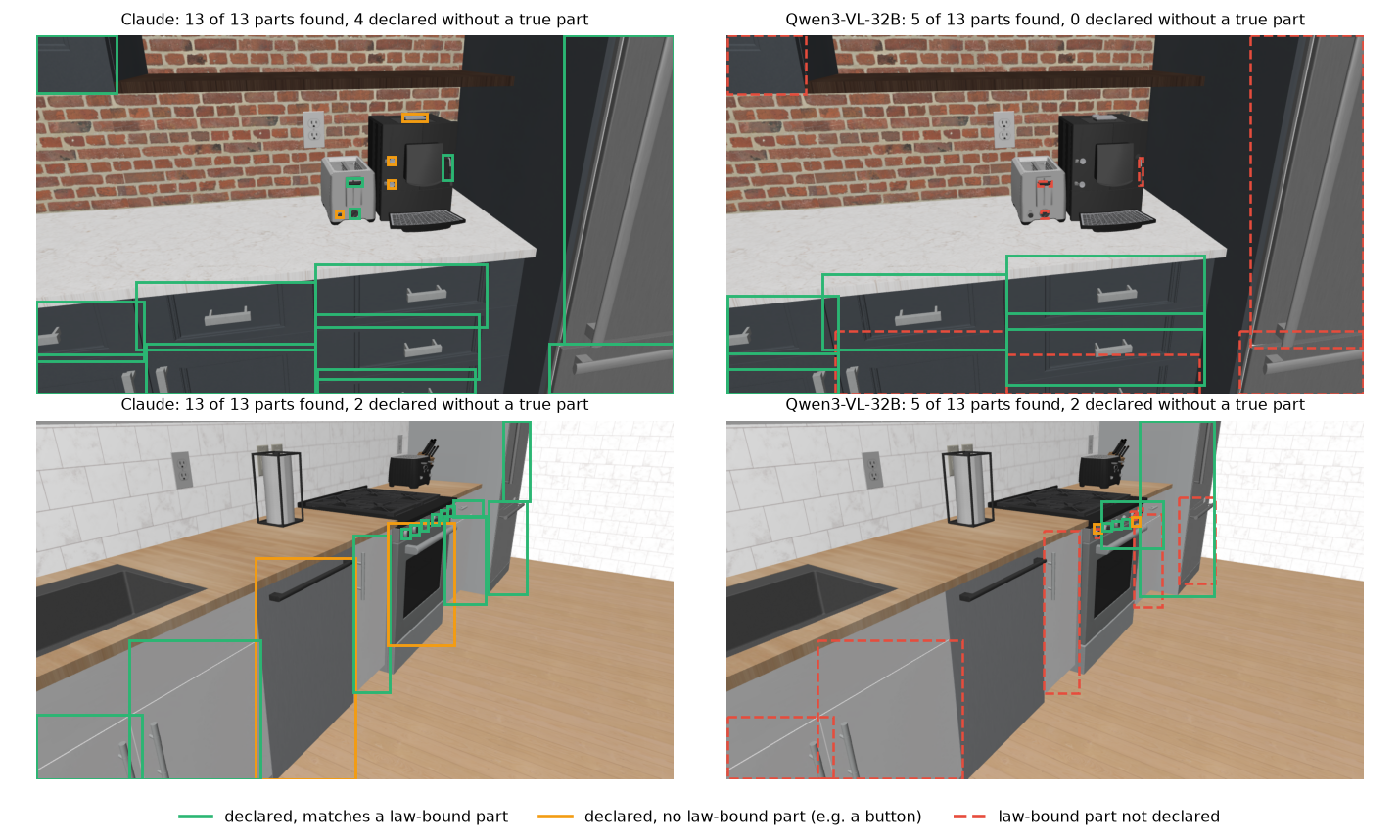}
\caption{Declarations on two held-out kitchens. Green: a declared part that matches a law-bound part. Orange: a
declared part that matches none, usually a real part that cannot move. Dashed red: a law-bound part that was not
declared. Claude finds all 13 parts in both views; Qwen3-VL-32B finds 5 in each.}
\label{fig:declare}
\end{figure}

\paragraph{The VLM's declarations are informative.} On held-out kitchens Claude finds 81--88\% of the law-bound
parts; 80--84\% of its declared parts are real; and it names the right kind of motion for 92--94\% of the parts it
finds. Its block weights have a Brier score of 0.16--0.17, against 0.25 for no information. These match the
development scores (0.81, 0.87, 0.94 and 0.15), so the prompts were not tuned to the development scenes. An open
model run locally, Qwen3-VL-32B, finds only 29--39\% of the parts.

\begin{table}[tbp]
\centering
\small
\caption{Kitchen tasks with Claude's declarations. Successes over all tasks; in brackets, achievable tasks solved
and impossible tasks refused.}
\label{tab:kitchens}
\begin{tabular}{@{}l c c@{}}
\toprule
Condition & Development (114) & Held-out (117) \\
\midrule
Every law-bound part known & 113 & 115 (100/102, 15/15) \\
\midrule
One view per part, VLM's raw ranges & 57 & n/a \\
One view per part, VLM's numbers ignored & 80 & n/a \\
\textbf{One view per part, calibrated ranges} & \textbf{91} & \textbf{94} (81/102, 13/15) \\
Two random views, calibrated & 71 & 65 (50/102, 15/15) \\
Two random views and one look at the target & 76 & 86 (71/102, 15/15) \\
\bottomrule
\end{tabular}
\end{table}

\paragraph{Declarations drive the agent, once the ranges are calibrated.} With one view per part, the VLM's
declarations give 80\% success on both development and held-out kitchens, against 98\% when every part is known
(\cref{tab:kitchens}). Nearly all of the gap comes from parts the VLM did not declare, most often a releaser. Taken
at face value the VLM's numbers hurt. Raw ranges gave 57 successes, fewer than ignoring the numbers (80), because
the robot trusted a far stop and hit a near one. The calibrated prior \cref{eq:calib}, fixed on development, raised
this to 91 and carried over unchanged to the held-out kitchens. With fewer views, more releasers are missed, and
success falls to 65.

\subsection{New physics from declared blocks}
\label{sec:exp-g3}

The kitchen laws use blocks the agent already had. Here we test blocks it did not have: gravity load, constant
force (preload) and end damper. Five MuJoCo object families have real mass under real gravity: a lid that falls
shut, a flap that falls open, a flap lifted by gas struts (gravity and preload), a soft-close door (end damper) and
a self-closing door (spring). Each family has 20 randomly sized objects and four instructions (fully or halfway,
normal or gently), with a 12\,s budget. The VLM saw one picture of each object. It declared gravity for the lid and
the flap, and gravity with preload for the strut flap, all correctly. It gave the soft-close damper a weight of
$0.45$, just below one half. Two agent limits found on development objects were fixed before the test: a breakaway
ramp for heavy parts at rest, and the reversal probe of \cref{prop:reversal} for preload.

\begin{table}[tbp]
\centering
\small
\caption{New physics on 100 fresh objects (20 per family, 4 instructions). Successes out of 80 per family.
``Hard'': blocks with weight $\ge0.5$ form the hypothesis space. ``Soft'': the floored prior \cref{eq:soft}.}
\label{tab:g3}
\begin{tabular}{@{}l c c c c c c@{}}
\toprule
Hypothesis space & Lid & Flap & Strut & Soft-close & Self-closing & Total (400) \\
\midrule
Original blocks only & 60 & 60 & 80 & 73 & 80 & 353 \\
All blocks, no declaration & 69 & 73 & 65 & 72 & 74 & 353 \\
VLM per family, hard & 79 & 80 & 66 & 73 & 80 & 378 \\
VLM per family, soft & 79 & 80 & 64 & 80 & 80 & \textbf{383} \\
VLM per object, hard & 79 & 80 & 66 & 76 & 80 & 381 \\
VLM per object, soft & 78 & 78 & 64 & 79 & 79 & 378 \\
True blocks & 79 & 80 & 66 & 80 & 80 & 385 \\
True law & 80 & 80 & 80 & 80 & 80 & 400 \\
\bottomrule
\end{tabular}
\end{table}

\Cref{tab:g3} gives the result. The VLM's declarations raise success from 353 to 378--383 of 400, close to the 385
of a perfect declaration. Against the original blocks they win 45 episodes and lose 20 ($p=0.003$); counted by
object, the weakest of the four uses still wins 33 objects and loses 16 ($p=0.02$). Allowing every block without a
declaration does not help (353): it runs the preload probe on objects with no preload and spends time on structures
the object does not have. Declarations help because they narrow the space.

The soft-close door shows \cref{prop:overturn,cor:steps} at work. As a hard choice, the damper at weight $0.45$ is
left out and the door loses 7 episodes. As a soft prior, the damper starts at $0.45$, the closing motion confirms
it, and the door succeeds in all 80 episodes, with the damper in the most probable structure every time (7 won, 0
lost, $p=0.016$). The strut flap shows the cost of \cref{lem:alias}. The original law folds the strut's preload into
friction. That law is wrong, but it is good enough to open the flap. The declared space separates preload from
friction, and under ``gently'' with heavy friction the probe and the search for the stop use up the budget (11 of
the 14 failures). Identifying the right model takes time that the agent does not yet weigh when it chooses
experiments.

\subsection{Unseen parts and looking again}
\label{sec:exp-looks}

We tested the look rule of \cref{sec:unseen} on two sets of 20 fresh kitchens (242 tasks), rendered and declared
after the rule was frozen. The robot starts from two random views. It asks for a look by \cref{eq:lookvalue} while
its hands keep working, and also whenever it would otherwise refuse with $\Prob(\unseen\mid\data)>\gamma=0.2$. A look
shows a head-camera view. The latency $L$ was fixed in advance at 10\,s, with 2\,s as a check.

\begin{table}[tbp]
\centering
\small
\caption{Looking again, on two fresh sets of 20 kitchens pooled (242 tasks, 36 impossible). Sign tests are against
one free look.}
\label{tab:looks}
\begin{tabular}{@{}l c c c@{}}
\toprule
Condition & Success & Refused & Against one free look \\
\midrule
One view per part (reference) & 196 & 36/36 & \\
Two random views, no look & 169 & 36/36 & \\
Two views, one free look at the target & 188 & 36/36 & \\
Look rule, answers after 10\,s & 177 & 36/36 & 4 won, 15 lost ($p=0.02$) \\
\textbf{Look rule, answers after 2\,s} & \textbf{199} & 36/36 & 11 won, 0 lost ($p=0.001$) \\
\bottomrule
\end{tabular}
\end{table}

\Cref{tab:looks} gives the result. It depends on the latency, as \cref{eq:lookvaluetime} predicts. With answers
after 2\,s the look rule is the best condition, above even the reference that starts from a view of every part,
though not significantly. With answers after 10\,s it is below a single free look: a releaser found late leaves too
little of the 30\,s budget to use it. Counted by kitchen, the effects are not significant (5 kitchens against 0 at
2\,s, $p=0.06$; 1 against 5 at 10\,s). This is a trend, not a firm result. The look rule refuses every impossible
task. A rule-based loop that reset its search after each look refused only 9 of 15 in the first fresh set. Using
the time-aware value \cref{eq:lookvaluetime} instead of \cref{eq:lookvalue} changed one task in 117. Since
$\rho=0.28$, most releasers that the first views missed stay missed however the looks are timed.

In small cabinet scenes with a visible catch, the VLM usually named the catch first. On a second held-out set it
failed to declare the catch in 4 of 24 such scenes. In those 4 scenes one view solves 2 of 20 tasks, and the frozen
look rule solves 18 (answers after 2\,s): a look shows the catch, and the robot grounds and opens it. This follow-up
is targeted, since we chose the set because it had misses.

\subsection{Parts that move in two ways, and mixed scenes}
\label{sec:exp-two}

\begin{figure}[tbp]
\centering
\includegraphics[width=0.92\linewidth]{\sroot 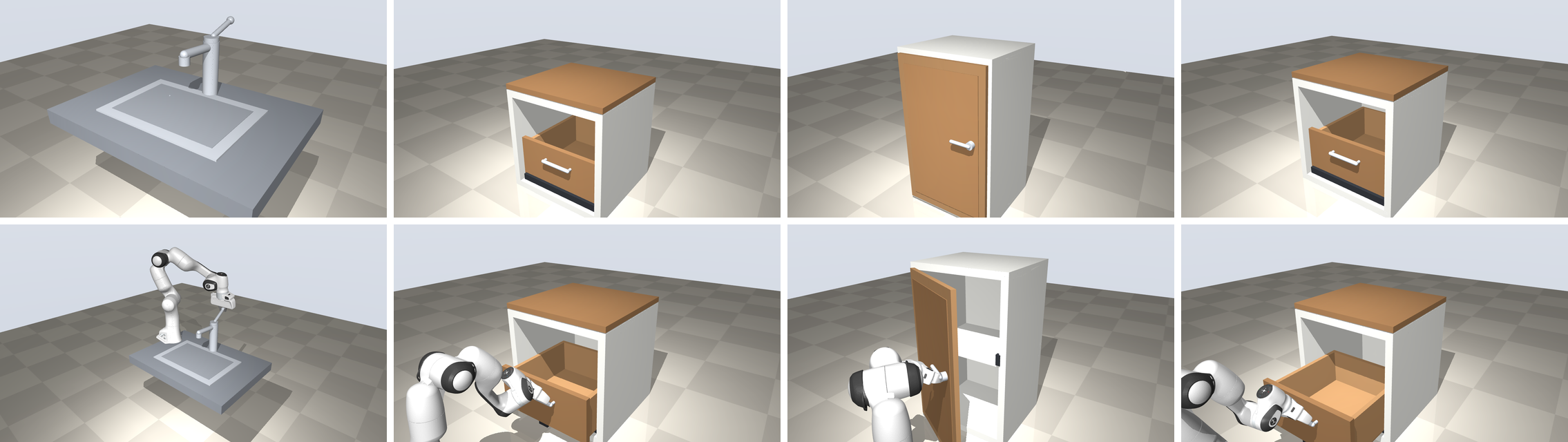}
\caption{Parts with two motions. Top: the one picture the VLM saw. Bottom: the robot working the part. From left:
mixer tap, drawer that must be lifted before it slides, door with a lever handle, drawer that sags.}
\label{fig:twomotions}
\end{figure}

\paragraph{The VLM's physics from one picture.} We first told the VLM the two motions in words and asked only for
the physics (\cref{fig:twomotions}). On 80 held-out parts (320 tasks), its answer raised success from 245 with no
starting belief to 269 (\cref{tab:two}; 28 tasks won, 4 lost, $p=2\times10^{-5}$). A hand-written guess did not help
(243). Most of the gain is on the drawer that must be lifted (17 won, none lost). The VLM said that the lift carries
weight, which touch cannot tell early, by \cref{lem:alias}. Even a perfect guess lost 37 tasks, nearly all by
\cref{cor:twice}: knowing that the lift carries weight is not knowing how much.

\begin{table}[tbp]
\centering
\small
\caption{Two-motion parts, 80 held-out parts, 80 tasks per kind. Only the starting belief changes.}
\label{tab:two}
\begin{tabular}{@{}l c c c c c@{}}
\toprule
Starting belief & Tap & Lift drawer & Lever door & Sagging drawer & All (320) \\
\midrule
None (flat) & 80 & 45 & 79 & 41 & 245 \\
Hand-written & 80 & 46 & 76 & 41 & 243 \\
\textbf{VLM, one picture} & 80 & \textbf{62} & \textbf{80} & 47 & \textbf{269} \\
Perfect (true blocks) & 80 & 62 & 80 & 61 & 283 \\
True law & 80 & 80 & 80 & 80 & 320 \\
\bottomrule
\end{tabular}
\end{table}

\paragraph{Mixed scenes: the VLM also declares the motions.} In a mixed scene the VLM must decide how each part
moves. Each scene is a row of five units: a tap, two drawers and two doors, each of a random kind
(\cref{fig:mixed}). For each part the VLM gives a box, its motions (kind and direction) and the physics. The robot
works each part with exactly the motions the VLM named. When a motion is latched and no declared motion releases it,
\unseen{} is all that is left, and the robot asks for a \emph{second look}: a closer picture of the part and the list
of motions that failed. We ran 20 held-out scenes (100 parts, 366 tasks; \cref{tab:mixed}).

\begin{figure}[tbp]
\centering
\includegraphics[width=0.92\linewidth]{\sroot 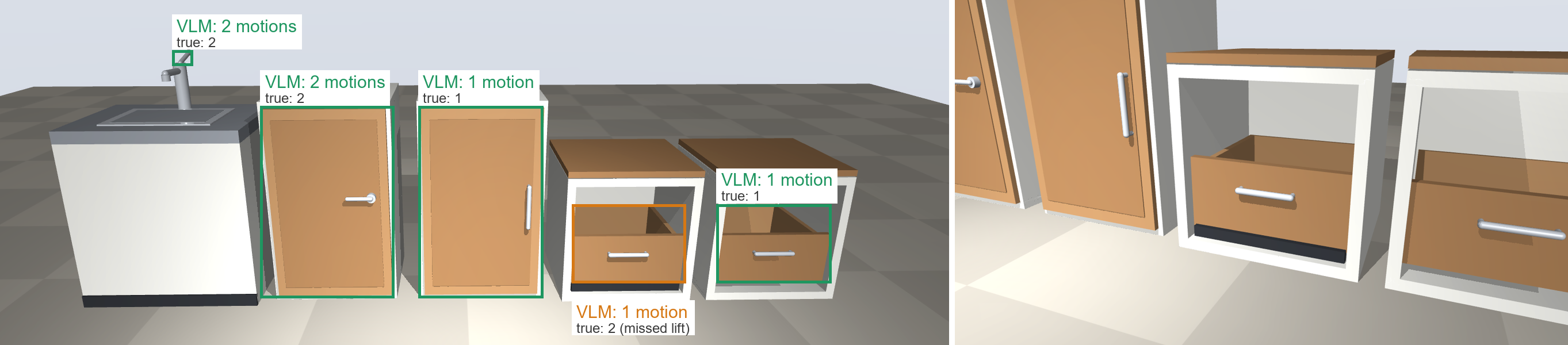}
\caption{A held-out mixed scene. Left: the VLM's boxes; green, the right number of motions; orange, one motion for a
drawer that must first be lifted. Right: the closer picture of the second look, in which the VLM found the lift.}
\label{fig:mixed}
\end{figure}

\begin{table}[tbp]
\centering
\small
\caption{Mixed scenes. Left: 20 held-out scenes (366 tasks). Right: the load-measuring reversal of
\cref{prop:reversal} on new held-out sets, without and with it.}
\label{tab:mixed}
\begin{tabular}{@{}l c@{}}
\toprule
Condition & Success (366) \\
\midrule
True law & 366 \\
True motions, flat prior & 317 \\
Each motion as its own part & 228 \\
VLM, no second look & 284 \\
\textbf{VLM, with second look} & \textbf{329} \\
\bottomrule
\end{tabular}
\hfill
\begin{tabular}{@{}l c c@{}}
\toprule
New held-out set & Without & With \\
\midrule
Two-motion parts, flat prior (320) & 253 & 283 \\
Two-motion parts, VLM (320) & 272 & \textbf{290} \\
Mixed, true motions, flat prior (363) & 320 & 344 \\
Mixed, VLM with second look (363) & 297 & \textbf{313} \\
\bottomrule
\end{tabular}
\end{table}

The VLM found all 100 parts and gave the right number of motions for 73. It gave one motion, the slide, for all 27
drawers that must be lifted first; the only sign of the hook is a dark strip under the drawer front. The robot
pulled, found the slide latched with no declared releaser, and asked again. In the closer picture the VLM found the
lift in all 27 cases, and in 74 of 74 over three mixed test sets. The second look won 45 tasks and lost none
($p=6\times10^{-14}$; 12 scenes better, none worse). Treating the two motions of a part as one law matters even more:
working each motion as its own part solved 228. With the second look every task reached its goal. All 37 failures
broke a safety rule, the same lift lowered before its load was known.

\paragraph{Feeling the load.} We then added the reversal of \cref{prop:reversal} before the first lowering, froze it,
and rendered new held-out sets (\cref{tab:mixed}, right). It helps every starting belief: the flat prior gains 30
tasks on two-motion parts and the VLM 18. It also shrinks the VLM's advantage (290 against 283; 12 won, 5 lost, not
significant), because the robot now measures the load instead of needing the guess. In the new mixed scenes the VLM
left the tap out of 10 scenes, calling it possibly decoration, so 40 tasks were lost before any touch. Without the
taps, the VLM with a second look solved 273 of 283 tasks.

\subsection{A physically simulated robot}
\label{sec:exp-robot}

Now every part is a set of MuJoCo joints carrying its law, and the robot is RoboCasa's mobile Panda: a wheeled base,
a torso lift, a seven-joint torque-controlled arm and a two-finger gripper, all colliding with the furniture
(\cref{fig:body}). The robot chooses where to stand, grasps the handle by friction, and turns the agent's force into
a hand force through the part's Jacobian, corrected by the wrist sensor. Four body faults were fixed on four
development scenes; for example, the arm's joint friction ate most of a door's push. We then froze the system and
ran 20 new mixed scenes once (366 tasks).

\begin{figure}[tbp]
\centering
\includegraphics[width=0.92\linewidth]{\sroot 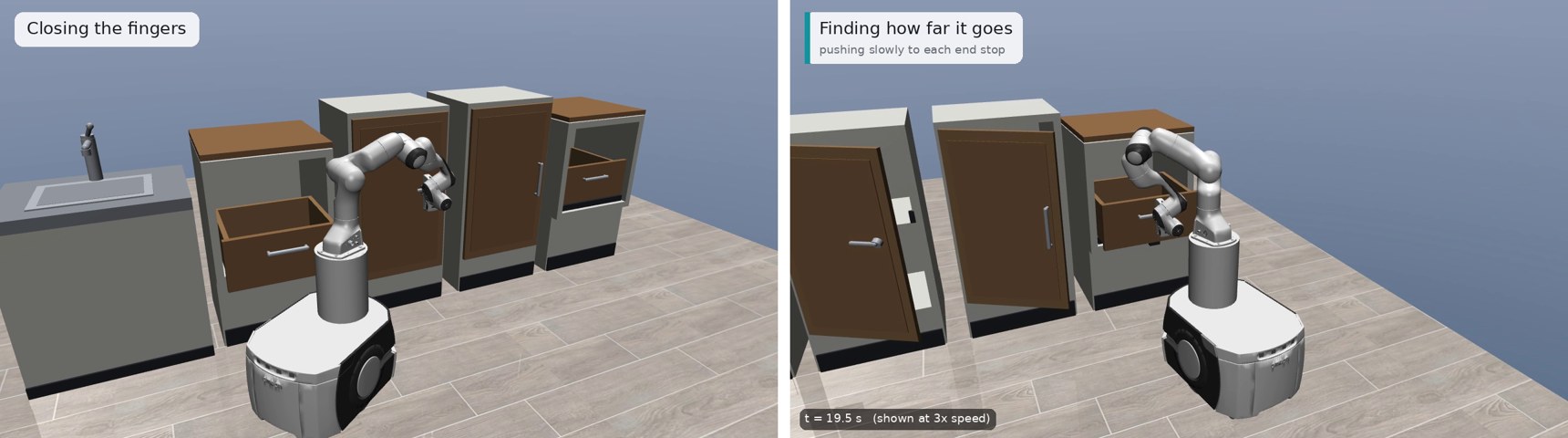}
\caption{The physical robot on a held-out scene. Left: it grasps a door's lever handle; the drawer it opened earlier
is still open. Right: a drawer that must be lifted, lifted and pulled out.}
\label{fig:body}
\end{figure}

\begin{table}[tbp]
\centering
\small
\caption{The physical robot on 20 new mixed scenes (366 tasks: 102 one-motion, 264 two-motion). ``No body'': the
same agent and VLM answers with the force put on the part directly.}
\label{tab:robot}
\begin{tabular}{@{}l c c c@{}}
\toprule
Starting belief & All & One motion & Two motions \\
\midrule
True motions, flat prior (robot) & 238 & 51 & 187 \\
\textbf{VLM with second look (robot)} & \textbf{247} & 51 & 196 \\
True law (robot) & 326 & 70 & 256 \\
VLM with second look, no body & 354 & 102 & 252 \\
\bottomrule
\end{tabular}
\end{table}

The grasping itself worked. In 1,098 episodes the robot never failed to grasp a handle and lost its grip once
(\cref{tab:robot}). It reached each part's full travel in 319 of 366 tasks; the other 47 were doors that swing too
far to follow from any place where the base clears the door. The VLM was again useful but not decisive (26 tasks won
and 17 lost against the flat prior, not significant). The main cost is the body. The same learner solved 354 tasks
without a body and 247 through the arm, whose inertia and friction sit between the agent's force and the part.

\subsection{Handles by depth, joints by touch, and nothing from the simulator}
\label{sec:exp-own}

\paragraph{Handles by depth, joints by touch.} On 50 parts in 10 new scenes, declared by the VLM after the handle
detector was frozen, the head camera placed the grip within 0.56\,cm of the handle (median; 90\% within 1.08\,cm), and
49 of 50 grips held. The joints found by touch were right in kind for all 68 motions, with the axis within
0.1$^\circ$ and a hinge within 1.1\,cm (medians). On the same 10 scenes (179 tasks), run once with both conditions
paired, the robot completed 53 tasks with its own joints and 69 with the simulator's (13 won, 29 lost, $p=0.02$).
Most of the loss is on taps, whose lever the hand pinches by its ball tip; the ball creeps between the fingers.

\paragraph{Nothing from the simulator.} We developed the last row of \cref{tab:ladder} on scenes used before (the
tap on 12 of them), froze it, wrote the scoring rule down, and then rendered 10 new scenes (50 parts). The VLM's answers
were the true list of parts and motions. In one session per scene, the robot opened each part halfway (the tap:
halfway and hot) and closed it again. We ran the same requests with the row above it, which takes handles from depth
and joints from touch but asks the simulator which part each box is and uses the simulator's floor map. An opening
counts when the main motion ends at 40--60\% of its true travel (the tap also needs its swing at 90\% or more of its
travel), a closing when it ends at 5\% or less; both need the true top speed at most 0.5 and impact at most 0.3.
\Cref{tab:own} gives the result (first test).

\begin{table}[tbp]
\centering
\small
\caption{The robot that takes nothing from the simulator (own), against the row above it in \cref{tab:ladder} (sim.:
told which part each box is, the simulator's floor map). Both find handles by depth and joints by touch. Two held-out
tests of 10 new scenes each (100 requests per condition), each run once; the second after the robot learned to look
again at a part it moved before. Closings scored by the second rule (see text).}
\label{tab:own}
\begin{tabular}{@{}l c c c c@{}}
\toprule
 & \multicolumn{2}{c}{First test} & \multicolumn{2}{c}{Second test} \\
\cmidrule(lr){2-3}\cmidrule(lr){4-5}
Requests & own & sim. & own, looks again & sim. \\
\midrule
Open halfway, all parts (50) & 19 & 14 & 20 & 14 \\
\quad of which the tap (10) & 5 & 0 & 7 & 1 \\
\quad of which cabinet and lever doors & 0 of 20 & 0 of 20 & 0 of 20 & 0 of 20 \\
Close (50) & 34 & 49 & 50 & 48 \\
\midrule
All (100) & \textbf{53} & \textbf{63} & \textbf{70} & \textbf{62} \\
Done by one condition only & 9 & 19 & 13 & 5 \\
\bottomrule
\end{tabular}
\end{table}

In the first test, taking nothing from the simulator cost tasks: 46 against 61 by the rule fixed in advance (10 won,
25 lost, $p=0.02$). It gained on the tap, which it opened in 5 of 10 scenes and the other robot, still pinching the
ball tip, in none. It lost on closing. After a failed attempt to open a cabinet door, the door closed by itself when the
robot let go, but the robot still believed it open and reached for the handle in the wrong place (13 misses). The
other robot is told which part the box is and finds the handle again. Seven of its lost closings, and two of the
other robot's, were parts never opened that the robot rightly called closed, which that rule counts as failures. We
therefore fixed a second rule before the next test: a closing also counts when the robot says the part is already
closed and it is. By the second rule the first test reads 53 against 63 ($p=0.09$).

We then let the robot keep track of a part it has moved. Before it works such a part again, it walks up and looks with
its head camera along the part's path, from its last reading to closed; the handle it finds gives the part's position
through the joints it felt. If it sees the part closed and is asked to close it, it says so. On development scenes this
raised closings from 26 to 30 of 30. We froze it and ran a second test on 10 further new scenes (\cref{tab:own}, second
test). The robot that takes nothing from the simulator now completed 70 requests against 62 (13 won, 5 lost, $p=0.10$;
by the first rule 67 against 62). It closed every part (50 of 50) and opened the tap in 7 of 10 scenes. The head
camera found the handle in 33 of its 47 looks; when it did not, the robot went by its last reading. Neither robot opened
a cabinet door or a door with a lever handle halfway in either test: from a base that stands still, the arm follows a
wide door only part of the way.

%=====================================================================
\section{Related work}
\label{sec:related}

\paragraph{Vision, language and action.} Vision--language--action
models~\citep{brohan2023rt2,kim2024openvla,octo2024,black2025pi0} learn actions from demonstrations, and force-aware
variants~\citep{yu2025forcevla,huang2025tactilevla} add touch to the input. Language and vision models also plan, or
write constraints for a controller~\citep{ahn2022saycan,huang2022innermonologue,huang2023voxposer,huang2024rekep}.
These methods act on what the image shows. Our agent uses the VLM only for priors and keeps the hidden physics as an
explicit posterior that touch updates.

\paragraph{Physical properties from vision and interaction.} Physical properties have been estimated from
video~\citep{wu2015galileo} and with language models~\citep{gao2024physobjects}. Phys2Real~\citep{wang2026phys2real},
CoRAL~\citep{cicek2026coral} and RigPI~\citep{he2026rigpi} refine VLM estimates of mass, friction or centre of mass
by interaction, and JODA~\citep{gao2026joda} lets a VLM choose joint effects from a library. These methods refine
numbers. Our VLM proposes the \emph{structure} of the law and the set of parts, motions and locks, with floors that
make every claim recoverable (\cref{prop:overturn}), and the agent keeps a hypothesis for what the VLM did not see.

\paragraph{Interactive perception and articulation.} Robots learn articulation by acting on
objects~\citep{bohg2017interactive,sturm2011articulation,martinmartin2014online,hausman2015active,mo2021where2act},
and latched mechanisms such as lockboxes have been explored with dependency
models~\citep{baum2017lockbox,kulick2015dependency}. These works assume a known set of parts. We add the unseen
hypothesis and show that touch cannot resolve it (\cref{lem:unseen}).

\paragraph{Identification and decisions.} Bayesian model comparison by evidence~\citep{kass1995bayesfactors,bishop2006prml},
simulation-based parameter inference~\citep{ramos2019bayessim,memmel2024asid} and Bayesian experimental
design~\citep{rainforth2024bed} are standard tools. The full problem is a dual-control
problem~\citep{feldbaum1960dual}. We replace it with a one-step, task-directed value of information, a classical
search index~\citep{stone1975search} and a governor~\citep{garone2017refgov}. Objects that may exist but have not been
observed are the subject of open-universe models~\citep{milch2005blog,srivastava2014oupomdp}, and the value of
deliberation under time limits goes back to~\citet{horvitz1991timedependent}.

%=====================================================================
\section{Limitations and conclusion}
\label{sec:conclusion}

\paragraph{Limitations.} Every result is in simulation, and the pictures are renders. The VLM was queried through a
cached file queue, so its latency was simulated. The value of looking again depends strongly on latency
(\cref{tab:looks}), and a timed run against a live model is needed. Every comparison is within one agent. We did not
compare with a vision--language--action policy or a learned adaptive policy trained on the same scenes, so the
results show what declarations add to this agent, not that it beats those families. Several results count tasks,
and the tasks of one scene are not independent; the latency effect is not significant when counted by kitchen. The
Gaussian posterior does not know that inertia, damping and friction are positive. We project samples onto physical
values, and the evidence is computed without that constraint. Through a robot arm the learner loses many tasks to
the arm's inertia and friction, and a robot that stands still cannot follow a wide door. The robot that takes nothing
from the simulator is now ahead of the one told which part each box is, but not significantly, and only after a change
made between two held-out tests.

\paragraph{Conclusion.} A picture shows what a part probably is and how it probably moves. Touch measures how it
really moves. We let a VLM make the first guess and let touch correct it. Because every law, with one coordinate or
two, is linear in its unknown weights, the posterior and the evidence are closed-form, and the floor on each prior
bounds the cost of a wrong claim. Because touch cannot see a part or a motion nobody declared, the robot keeps that
hypothesis open and looks again when a look is worth its time. The VLM helped most where the picture shows what early
touches cannot: a lid's weight, a visible catch, a lift that carries load. It helped least where the picture is
silent. The largest losses came from parts the VLM missed and from the robot's own body, not from wrong physics
guesses. The next steps are a base that moves while it holds a part, so that a wide door can be followed, learning
through the arm, a live VLM with real timing, and a hardware robot.

\paragraph{Code and data.} The simulation code, the scenes and the recorded results are available from the author
on request.

\paragraph{Use of AI tools.} Claude (Anthropic) was used in two ways. In the experiments it was the vision--language
model, as described in \cref{sec:experiments}. In the research it was used, through Claude Code, to help write and run
the simulation and analysis code and to draft and edit the text of this paper. The author directed the work, checked
the results and the text, and takes full responsibility for the content.

\bibliographystyle{plainnat}
\bibliography{\sroot references}

%=====================================================================
\appendix
\section{Posterior and evidence of the regression}
\label{app:nig}

Write $\Phi=\Phi_h$ and $d=d_h$. Given $\sigma^2$, the joint density of the data and the coefficients is
\begin{equation*}
  \N(y;\Phi\beta,\sigma^2I)\,\N(\beta;m_0,\sigma^2V_0)
  =\frac{|V_0|^{-1/2}}{(2\pi\sigma^2)^{(n+d)/2}}\exp\Big(-\frac{Q}{2\sigma^2}\Big),
\end{equation*}
with $Q=(y-\Phi\beta)^{\!\top}(y-\Phi\beta)+(\beta-m_0)^{\!\top}V_0^{-1}(\beta-m_0)$. Expanding $Q$ and collecting the
terms in $\beta$ gives
$\beta^{\!\top}V_n^{-1}\beta-2\beta^{\!\top}V_n^{-1}m_n+y^{\!\top}y+m_0^{\!\top}V_0^{-1}m_0$, with $V_n$ and $m_n$ as in
\cref{eq:post1}. Completing the square,
\begin{equation*}
  Q=(\beta-m_n)^{\!\top}V_n^{-1}(\beta-m_n)+y^{\!\top}y+m_0^{\!\top}V_0^{-1}m_0-m_n^{\!\top}V_n^{-1}m_n
  =(\beta-m_n)^{\!\top}V_n^{-1}(\beta-m_n)+2(b_n-b_0),
\end{equation*}
with $b_n$ as in \cref{eq:post2}. So $\beta\mid\sigma^2,y\sim\N(m_n,\sigma^2V_n)$. Integrating $\beta$ out contributes
$(2\pi\sigma^2)^{d/2}|V_n|^{1/2}$, which leaves
\begin{equation*}
  p(y\mid\sigma^2,h)=(2\pi\sigma^2)^{-n/2}\sqrt{|V_n|/|V_0|}\;\exp\big(-(b_n-b_0)/\sigma^2\big).
\end{equation*}
Multiplying by the prior $\mathrm{IG}(\sigma^2;a_0,b_0)=\frac{b_0^{a_0}}{\Gamma(a_0)}(\sigma^2)^{-a_0-1}e^{-b_0/\sigma^2}$ gives a
function of $\sigma^2$ proportional to $(\sigma^2)^{-a_n-1}e^{-b_n/\sigma^2}$ with $a_n=a_0+n/2$. So
$\sigma^2\mid y\sim\mathrm{IG}(a_n,b_n)$. The integral of $(\sigma^2)^{-a_n-1}e^{-b_n/\sigma^2}$ over $\sigma^2$ is
$\Gamma(a_n)/b_n^{a_n}$, which gives \cref{eq:evidence}:
\begin{equation*}
  p(y\mid h)=(2\pi)^{-n/2}\sqrt{\frac{|V_n|}{|V_0|}}\;\frac{b_0^{a_0}}{\Gamma(a_0)}\;\frac{\Gamma(a_n)}{b_n^{a_n}} .
\end{equation*}

\paragraph{Large $n$.} If $V_n^{-1}=V_0^{-1}+n\Sigma_\phi+o(n)$, then $\log|V_n|=-d\log n-\log|\Sigma_\phi|+o(1)$. If
$b_n=\tfrac n2\hat\sigma_h^2+O(1)$, Stirling's formula $\log\Gamma(a_n)=a_n\log a_n-a_n+O(\log n)$ gives
$-a_n\log b_n+\log\Gamma(a_n)=-\tfrac n2\log\hat\sigma_h^2-\tfrac n2+r_n$. The remainder $r_n$ depends on the structure
only through $O(1)$ terms, because $a_n$ is the same for all structures. Subtracting the log evidence of two
structures fitted to the same data gives \cref{eq:bic}.

\section{Parameters}
\label{app:params}

\begin{table}[h]
\centering
\small
\begin{tabular}{@{}l l l@{}}
\toprule
Symbol & Value & Meaning \\
\midrule
$\Delta t$ & 0.01\,s & control period \\
$\eta$ & 0.05 & floor on block and lock weights \\
$\kappa$ & 0.5 & minimum relative spread of the range prior \\
$\pi_f,\ \pi_j$ & 0.45, 0.04 & prior of \free{} and \jammed{} \\
$\mu$ & 0.40 & prior share of \unseen{} among locks in kitchens (measured) \\
$\rho$ & 0.28 & chance that a look shows a missed releaser in kitchens (measured) \\
$s$ & 0.87 & success with a known releaser and the full budget (measured) \\
$c$ & 0.05 & cost of a look, in success probability (chosen on development) \\
$\gamma$ & 0.2 & gate on $\Prob(\unseen)$ for refusing \\
$Q_{0.01}$ & 1\% & stop quantile used by the governor \\
CVaR level & 25\% & worst share of sampled laws in the planner \\
$T$ & 30\,s & task budget in kitchens (12\,s for valves and objects) \\
\bottomrule
\end{tabular}
\end{table}

\end{document}